\newcommand{\punt}[1]{}
\newcommand{\trp}{{^\top}} %
\renewcommand{\eqref}[1]{Equation~\ref{eq:#1}}
\newcommand{\Nrm}{\mathcal{N}}
\newcommand{\Tr}{\mathrm{Tr}}
\newcommand{\figref}[1]{Figure~\ref{fig:#1}}  %
\newcommand{\secref}[1]{Section~\ref{sec:#1}}
\newcommand{\tabref}[1]{Table~\ref{tab:#1}}  %
\newcommand{\propref}[1]{Proposition~\ref{prop:#1}}
\newcommand{\lemmaref}[1]{Lemma~\ref{lemma:#1}}
\newcommand{\suppsecref}[1]{Appendix~\ref{supp:#1}}  
\newcommand{\vx}{\mathbf{x}}
\newcommand{\va}{\mathbf{a}}
\newcommand{\Dat}{\mathcal{D}}
\newcommand{\vphi}{\mathbf{\ensuremath{\bm{\phi}}}}
\newcommand{\vz}{\mathbf{z}}
\newcommand{\ve}{\mathbf{e}}
\newcommand{\vmu}{\mathbf{\ensuremath{\bm{\mu}}}}
\newcommand{\vtheta}{\mathbf{\ensuremath{\bm{\theta}}}}
\newcommand{\mQ}{\mathbf{Q}}
\newcommand{\mLam}{\bm{\Lambda}}
\newcommand{\vn}{\mathbf{n}}
\DeclareMathOperator{\Em}{\mathbb{E}}
\newtheorem{thm}{Theorem}[section]
\newtheorem{lem}{Lemma}[section]
\newtheorem{prop}[thm]{Proposition}
\def\argmin{\mathop{\rm arg\,min}}
\def\Pr{\ensuremath{\text{Pr}}}
\newcommand{\bigO}{\mathcal{O}}
\DeclareMathOperator{\MMD}{MMD}
\DeclareMathOperator{\MMDhat}{\widehat{MMD}}
\DeclareMathOperator{\MMDhatsq}{\widehat{MMD}^2}
\DeclareMathOperator{\MMDphi}{MMD_{\mathit{k}_\Phi}}
\DeclareMathOperator{\MMDphisq}{MMD_{\mathit{k}_\Phi}^2}
\DeclareMathOperator{\tMMD}{\widetilde{MMD}}
\DeclareMathOperator{\tMMDsq}{\widetilde{MMD}^2}
\DeclareMathOperator{\tMMDphi}{\widetilde{MMD}_{\mathit{k}_\Phi}}
\DeclareMathOperator{\tMMDphisq}{\widetilde{MMD}_{\mathit{k}_\Phi}^2}
\newcommand{\privtheta}{{\color{blue}{\widetilde{\vtheta}}}}
\newcommand{\pubtheta}{{\color{orange}{\widehat{\vtheta}}}}
\title{Pre-trained Perceptual Features\\Improve Differentially Private Image Generation}
\author{%
\name Frederik Harder \email fharder@tue.mpg.de \\
\addr Max Planck Institute for Intelligent Systems \& University of T\"ubingen
\AND
\name Milad Jalali Asadabadi \email miladj7@cs.ubc.ca \\
\addr University of British Columbia
\AND
\name Danica J.\ Sutherland \email dsuth@cs.ubc.ca\\
\addr University of British Columbia \& Alberta Machine Intelligence Institute 
\AND
\name Mijung Park \email mijungp@cs.ubc.ca\\
\addr University of British Columbia \& Alberta Machine Intelligence Institute 
}
\begin{document}

\maketitle

\begin{abstract}
Training even moderately-sized generative models with differentially-private stochastic gradient descent (DP-SGD) is difficult:
the required level of noise for reasonable levels of privacy is simply too large.
We advocate instead building off a good, relevant representation on an informative public dataset, then learning to model the private data with that representation.
In particular, we minimize the maximum mean discrepancy (MMD) between private target data and a generator's distribution,
using a kernel based on perceptual features learned from a public dataset.
With the MMD, we can simply privatize the data-dependent term once and for all,
rather than introducing noise at each step of optimization as in DP-SGD.
Our algorithm allows us to generate CIFAR10-level images with $\epsilon \approx 2$ which capture distinctive features in the distribution,
far surpassing the current state of the art, which mostly focuses on datasets such as MNIST and FashionMNIST at a large $\epsilon \approx 10$.
Our work introduces simple yet powerful foundations for reducing the gap between private and non-private deep generative models.
Our code is available at \url{https://github.com/ParkLabML/DP-MEPF}.\footnote{This is a revision of the first published version which contained erroneous FID scores. Please refer to this paper's OpenReview page for a clarification of our errors and the older version.}
\end{abstract}

\section{INTRODUCTION}

The gold standard privacy notion, \textit{differential privacy} (DP), is now ubiquitous in a diverse range of academic research, industry products \citep{DP_Apple}, and even government databases \citep{NCSL}. DP provides a mathematically provable privacy guarantee, which is its main strength and reason for its popularity. 
DP even offers means of tracking the effect of multiple accesses to the same data on it's overall privacy level, but with each access, the privacy of the data gradually degrades. 
To guarantee a high level of privacy, one thus needs to limit access to data, a challenge in applying DP with the usual iterative optimization algorithms used in machine learning. 

Differentially private data generation solves this problem by creating a synthetic dataset that is \emph{similar} to the private dataset, in terms of some chosen similarity metric.
While producing such a synthetic dataset incurs a privacy loss, the resulting dataset can be used repeatedly without further loss of privacy.
Classical approaches, however, typically assume a certain class of pre-specified purposes on how the synthetic data can be used
\citep{Mohammed_DPDR_for_data_mining, Xiao_DPDR_trough_partitioning, Hardt_simple_practical_DPDR, Zhu_DPDP_survey}.
If data analysts use the data for other tasks outside these pre-specified purposes, the theoretical guarantees on its utility are lost.

To produce synthetic data usable for potentially \emph{any} purpose, many papers on DP data generation have utilized the recent advances in deep generative modelling.
The majority of these approaches are based on the generative adversarial network \citep[GAN;][]{GAN} framework, where a discriminator and a generator play an adversarial game to optimize a given distance metric between the true and  synthetic data distributions. 
Most approaches under this framework have used DP-SGD \citep{DP_SGD}, where the gradients of the discriminator (which compares generated samples to private data) are privatized in each training step, resulting in a high overall privacy loss \citep{pmlr-v54-park17c, DP_CGAN, PATE_GAN, DPGAN, DBLP:conf/sec/FrigerioOGD19}. 
Another challenge is that, as the gradients must have bounded norm to derive the DP guarantee, the amount of noise for privatization in DP-SGD increases proportionally to the dimension of the discriminator. Hence, these methods are typically bound to relatively small discriminators, limiting the ability to learn data distributions beyond, say, MNIST \citep{lecun2010mnist} or FashionMNIST \citep{FashionMNIST}.

Given these challenges, the heavy machinery such as GANs and large-scale auto-encoder-based methods -- capable of generating complex datasets in a non-private setting -- fails to model datasets such as CIFAR-10 \citep{krizhevsky2009learning} or CelebA \citep{liu2015faceattributes} with a meaningful privacy guarantee (e.g., $\epsilon \approx 2$).
Typical deep generative modeling papers have moved well beyond these datasets, but to the best of our knowledge, currently there is no DP data generation method that can produce reliable samples at a reasonable privacy level. 

How can we reduce this huge gap between the performance of non-private deep generative models and that of private counterparts?
We argue that we can narrow this gap by using the abundant resource of \emph{public} data,
in line with the core message of \citet{tramer2021differentially}: \textit{We simply need better features for differentially private learning}. While \citeauthor{tramer2021differentially} demonstrated this in the context of DP classification, we aim to show the applicability of this reasoning for the more challenging problem of DP data generation, with a focus on high-dimensional image generation.
We propose to exploit public data to learn \textit{perceptual features} (PFs)
    from public data,
    which we will use to compare synthetic and real data distributions.
    Following \citet{perceptual_features}, we use ``perceptual features'' to mean the vector of all activations of a pretrained deep network for a given data point,
    e.g.\ the hundreds of thousands of hidden activations from applying a trained deep classifier to an image.
    Building on  \citet{perceptual_features}, who use PFs for transfer learning in natural image generation, our goal is to improve the quality of natural images generated with differential privacy constraints. 

    We construct a kernel on images using these powerful PFs,
    then train a generator by minimizing the Maximum Mean Discrepancy (MMD) \citep{Gretton2012} between distributions \citep[as in][]{dpmerf,gmmn,mmd-min,perceptual_features}.
    This scheme 
    is non-adversarial, leading to simpler and more stable optimization;
    moreover, it allows us to privatize the mean embedding of the private dataset \emph{once},
    using it at each step of generator training without incurring cumulative privacy losses.

    We observe in our experiments that as long as the public data contains more complex patterns than private data, e.g., transferring the knowledge learned from ImageNet as public data to generate CIFAR-10 images as private data, the learned features from public data are useful enough to generate good synthetic data.
    We successfully generate reasonable samples for CIFAR-10, CelebA, MNIST, and FashionMNIST in high-privacy regimes.
    We also theoretically analyze the effect of privatizing our loss function, helping understand the privacy-accuracy trade-offs in our method. 
    
    The main point of our paper is that features from public data are a key tool for improved DP data generation,
a point we think our experiments make resoundingly;
this may be ``obvious'', but has not been explored for image generation.
Our proposed method, in particular, is
a simple (which, we think, is a good thing) initial technique
exploiting this idea, which outperforms simple pretraining of DP-GAN and DP-Sinkhorn (see \secref{exp}). 
We hope this work will inspire future work on other ways to use public features for improving image generation with differential privacy.

\section{BACKGROUND}

We provide background information on maximum mean discrepancy and differential privacy.

\paragraph{Maximum Mean Discrepancy} \label{sec:mmd}

The MMD is a distance between distributions based on a kernel
$k_\phi(x, y) = \langle \phi(x), \phi(y) \rangle_{\mathcal H}$,
where $\phi$ maps data in $\mathcal X$ to a Hilbert space $\mathcal H$ \citep{Gretton2012}.
One definition is 
\[
\MMD_{k_\phi}(P,Q) =\big\lVert \Em_{x\sim P}[\phi(x)] - \Em_{y\sim Q}[\phi(y)] \big\rVert_{\mathcal{H}}
,\]
where $\vmu_\phi(P) = \Em_{x\sim P}[\phi(x)]\in\mathcal{H}$ is known as the
(kernel) \textit{mean embedding} of $P$,
and is guaranteed to exist if $\mathbb{E}_{x\sim P}\sqrt{k(x,x)}<\infty$
\citep{Smola2007}. 
If $k_\phi$ is
\textit{characteristic} \citep{Sriperumbudur2011}, then $P \mapsto \vmu_\phi(P)$
is injective, so $\MMD_{k_\phi}(P,Q)=0$ if and only if $P=Q$.

For a sample set
$\Dat = \{\vx_i\}_{i=1}^m \sim P^m$,
the empirical mean embedding
$\vmu_\phi(\Dat) = \frac1m \sum_{i=1}^m \phi(\vx_i)$
is the ``plug-in'' estimator of $\vmu_\phi(P)$
using the empirical distribution of $\Dat$.
Given $\tilde\Dat = \{\tilde \vx_i\}_{i=1}^n \sim Q^n$,
we can estimate $\MMD_{k_\phi}(P, Q)$ 
as the distance between empirical mean embeddings,
\begin{align}
\!\!\MMD_{k_\phi}(\Dat, \tilde\Dat)
= \left\lVert \frac{1}{m} \sum_{i=1}^m \phi(\vx_i) - \frac{1}{n} \sum_{i=1}^n \phi(\tilde \vx_i) \right\rVert_{\mathcal H}
\label{eq:norm2_mmd}
.\end{align}

We would like to minimize the distance between a target data distribution $P$ (based on samples $\Dat$)
and the output distribution $Q_{g_\vtheta}$ of a generator network $g_\vtheta$.
If the feature map is finite-dimensional and norm-bounded, following \citet{dpmerf, dphp}, we can privatize the mean embedding of the data distribution $\vmu_\phi(\Dat)$ with a known DP mechanism such as the Gaussian or Laplace mechanisms, to be discussed shortly.
As the summary of the real data does not change over the course of a generator training, we only need to privatize $\vmu_\phi(\Dat)$ once.

\paragraph{Differential privacy}\label{subsec:DP}

A mechanism $\mathcal{M}$ is  ($\epsilon$, $\delta$)-DP  for a given $\epsilon \geq 0$ and $\delta \geq 0$ if and only if
\[
\Pr[\mathcal{M}(\Dat) \in S] \leq e^\epsilon \cdot \Pr[\mathcal{M}(\Dat') \in S] + \delta
\] %
 for all possible sets of the mechanism's outputs $S$ and all neighbouring datasets $\Dat$, $\Dat'$ that differ by a single entry.
One of the most well-known and widely used DP mechanisms is the \textit{Gaussian mechanism}.
The Gaussian mechanism adds a calibrated level of noise to a function $\mu: \Dat \mapsto \mathbb{R}^p$ to ensure that the output of the mechanism is ($\epsilon, \delta$)-DP:
$\widetilde{\mu}(\Dat) = \mu(\Dat) + n$,
where $n\sim \Nrm(0, \sigma^2 \Delta_\mu^2\mathbf{I}_p)$.
Here, 
$\sigma$ is often called a privacy parameter, which is a function\footnote{The relationship can be numerically computed by packages like
\texttt{auto-dp} \citep{wang2019subsampled}, among other methods.} of $\epsilon$ and $\delta$. 
 $\Delta_\mu$ is often called the 
 {\it{global sensitivity}}
\citep{dwork2006our}, which is the maximum difference in $L_2$-norm given two neighbouring $\Dat$ and $\Dat'$,  $||\mu(\Dat)-\mu(\Dat') ||_2$.
In this paper, we will use the Gaussian mechanism to ensure the mean embedding of the data distribution is DP.

\section{METHOD}

In this paper, to transfer knowledge from public to private data distributions, we construct a particular kernel $k_{\Phi}$ to use in \eqref{norm2_mmd} based on \textit{perceptual features} (PFs).

\begin{figure*}[t]
    \centering
    \includegraphics[width=0.8\textwidth]{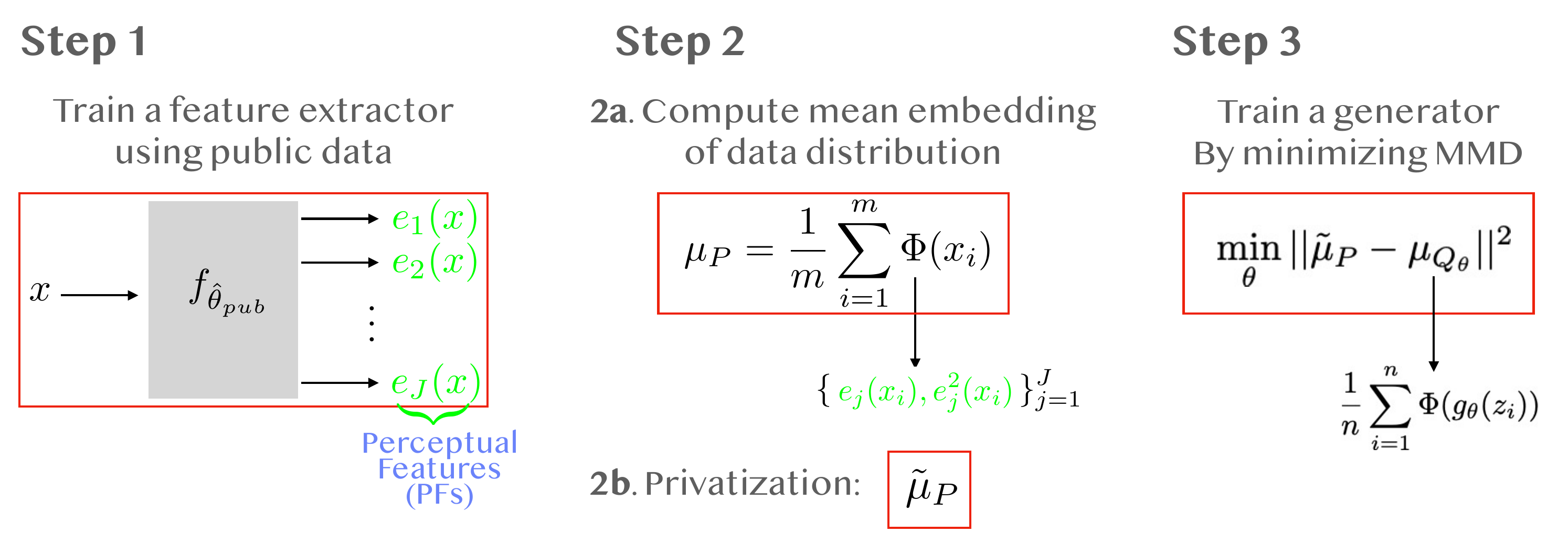}
    \caption{Three steps in \textit{differentially private mean embedding with perceptual features (DP-MEPF)}.
    \textbf{Step 1:} We train a feature extractor neural network, $f_{\hat\theta_{pub}}$, using public data. This is a function of public data, with no privacy cost to train. A trained $f_{\hat\theta_{pub}}$ maps an input $\vx$ to perceptual features (in green), the outputs of each layer. 
    \textbf{Step 2:} We compute the mean embedding of the data distributions using a feature map consisting of the first and second moments (in green) of the perceptual features,
    and privatize it based on the Gaussian mechanism (see text).
    \textbf{Step 3:} We train a generator $g_\theta$,
    which produces synthetic data from latent codes $z_i \sim \Nrm(0, I)$,
    by minimizing the privatized MMD.
    }
    \label{fig:Three_Steps}
\end{figure*} 

\subsection{MMD with perceptual features as a feature map}
We call our proposed method \textit{Differentially Private Mean Embeddings with Perceptual Features (DP-ME\textbf{P}F)}, analogous to the related method DP-ME\textbf{R}F \citep{dpmerf}.
We use high-dimensional, over-complete perceptual features from a feature extractor network pre-trained on a public dataset, as illustrated in \textbf{Step 1} of \figref{Three_Steps}.
Given a vector input $\vx$, the pre-trained feature extractor network outputs the perceptual features from each layer, where the $j$th layer's PF is denoted by $\ve_j(\vx)$. Each of the $J$ layers' perceptual features is of a different length, $\ve_j(\vx) \in \mathbb{R}^{d_j}$; the total dimension of the perceptual feature vector is $D = \sum_{j=1}^J d_j$.

As illustrated in \textbf{Step 2} in \figref{Three_Steps}, we use those PFs to form our feature map $\Phi(\vx):= [\vphi_1(\vx), \vphi_2(\vx)]$, where the first part comes from a concatenation of PFs from all the layers: $\vphi_1(\vx) = [\ve_1(\vx), \cdots, \ve_J(\vx)]$, while the second part comes from their squared values: $\vphi_2(\vx) = [\ve^2_1(\vx), \cdots, \ve^2_J(\vx)]$, where $\ve_j^2(\vx)$ means each entry of $\ve_j(\vx)$ is squared. 
Using this feature map, we then construct the mean embedding of a data distribution given the data samples $\Dat = \{\vx_i\}_{i=1}^m$: 
\begin{equation} \label{eq:feat-map}
    \vmu_P(\Dat) = \begin{bmatrix} 
    {\vmu}_{P}^{\vphi_1}(\Dat)  \\[1.25ex]
    {\vmu}_{P}^{\vphi_2}(\Dat)
    \end{bmatrix}
    = \begin{bmatrix} 
    \frac{1}{m} \sum_{i=1}^m \vphi_1(\vx_i)  \\[1.25ex]
    \frac{1}{m} \sum_{i=1}^m \vphi_2(\vx_i)
    \end{bmatrix}
.\end{equation}  

Lastly (\textbf{Step 3} in \figref{Three_Steps}), we will train a generator $g_\vtheta$ that maps latent vectors $\vz_i \sim \Nrm(0, I)$ to a synthetic data sample $\tilde\vx_i = g_\vtheta(\vz_i)$; we need to find good parameters $\vtheta$ for the generator.  
In non-private settings, we estimate the generator's parameters by
minimizing an estimate of $\MMDphisq(P, Q_{g_\vtheta})$,
using $\tilde\Dat = \{ \tilde\vx_i \}$ in \eqref{norm2_mmd},
similar to \citet{mmd-min,gmmn,perceptual_features}.
In private settings, we privatize $\Dat$'s mean embedding to $\tilde\vmu_\phi(\Dat)$ with the Gaussian mechanism (details below), and minimize
\begin{align}
    \tMMDphisq(\Dat, \tilde\Dat)
    =  \big\| \tilde\vmu_\phi(\Dat) - \vmu_\phi(\tilde\Dat) \big\|^2 
    \label{eq:pf_mmd}
.\end{align}

A natural question that arises is whether the MMD using the PFs is a metric:
if $\mathrm{MMD}_{k_\Phi}(P, Q) = 0$ only if $P = Q$.
As PFs have a finite-dimensional embedding, we in fact know this cannot be the case \citep{Sriperumbudur2011}.
Thus, there exists \emph{some} pair of distributions which our MMD cannot distinguish.
However,
given that linear functions in perceptual feature spaces can obtain excellent performance on nearly any natural image task
(as observed in transfer learning),
it seems that PFs are ``nearly'' universal for natural distributions of images \citep{perceptual_features}.
Thus we expect the MMD with this kernel to do a good job of distinguishing ``natural'' distributions from one another,
though the possibility of ``adversarial attacks'' perhaps remains.

A more important question in our context is whether 
this MMD serves as a good loss for training a generator, and whether the resulting synthetic data samples are reasonably faithful to the original data samples.
Our experiments in \secref{exp}, as well as earlier work by \citet{perceptual_features} in non-private settings, imply that it is.

\paragraph{Privatization of mean embedding}
We privatize the mean embedding of the data distribution only once, and reuse it repeatedly during the training of the generator $g_\vtheta$. We use the Gaussian mechanism to separately privatize the first and second parts of the feature map.
We normalize each type of perceptual features such that $\|\vphi_1(\vx_i) \|_2 =1 $ and  $\|\vphi_2(\vx_i) \|_2 =1$ for each sample $\vx_i$.
After this change, the sensitivity of each part of the mean embedding is
\begin{align}
    \max_{\Dat, \Dat' \,\text{ s.t. }\, |\Dat - \Dat'|=1} \|\vmu_{\vphi_t}(\Dat) - \vmu_{\vphi_t}(\Dat') \|_2 \leq \tfrac{2}{m}, 
    \label{eq:sensitivity}
\end{align}
where $\vmu_{\vphi_t}(\Dat)$ denotes the two parts of the mean embedding for $t = 1, 2$.
Using these sensitivities, we add Gaussian noise to each part of the mean embedding, obtaining
\begin{align}
    \tilde\vmu_\Phi(\Dat) = \begin{bmatrix} 
    \tilde{\vmu}_{\vphi_1}(\Dat)  \\[1.25ex]
    \tilde{\vmu}_{\vphi_2}(\Dat)
    \end{bmatrix}
    = \begin{bmatrix} 
    \frac{1}{m}\sum_{i=1}^m \vphi_1(\vx_i) + \vn_1  \\[1.25ex]
    \frac{1}{m}\sum_{i=1}^m \vphi_2(\vx_i) + \vn_2
    \end{bmatrix},\label{eq:private_ME}
\end{align} where $\vn_t \sim \Nrm(0, \frac{4 \sigma^2 }{m^2} I)$ for $t=1, 2$. 

Since we are using the Gaussian mechanism twice, we simply compose the privacy losses from each mechanism. More precisely, given a desired privacy level $\epsilon, \delta$, we use the package of \citet{wang2019subsampled} to find the corresponding $\sigma$ for the two Gaussian mechanisms.

\paragraph{Labeled data generation} 
Extending our framework to generate both labels and input images is straightforward. As done by \citet{dpmerf}, we construct a separate mean embedding for each class-conditional input distribution and then concatenate them into a single embedding

\begin{align} 
    \tilde{\vmu}_{\vphi_t}(\Dat)
    = \begin{bmatrix} 
    \frac{1}{m}\sum_{i \in C_1} \vphi_t(\vx_i) + \vn_{t,1} & %
    \cdots &
    \frac{1}{m}\sum_{i \in C_K} \vphi_t(\vx_i) + \vn_{t,K}
    \end{bmatrix}^\top,
    \label{eq:private_labeled_ME}
\end{align} where $K$ is the number of classes and $C_k = \{i \in [m] | y_i = k\}$ is the set of indices belonging to class $k$.
As a result, the size of the final mean embedding is $D \times K$ (number of perceptual features by the number of classes) if we use only the first moment, or $2 \times D \times K$ if we use the first two moments. 
This is exactly the conditional mean embedding with a discrete kernel on the class label \citep{song2013kernel}.
In the case of imbalanced data, an estimate of the label distribution can be obtained at low privacy cost with a DP release of the class counts, as done in \cite{dpmerf}. Since all datasets considered in this paper are balanced, this step is not necessary in our experiments.

\subsection{Differentially private early stopping}\label{sec:early-stopping}
On some datasets (CelebA and Cifar10) we observe that the generated sample quality deteriorates if the model is trained for too many iterations in high-privacy settings. This is indicated by a steady increase in FID score \citep{heusel2017gans}, and likely due to overfitting to the static noisy embedding. Since the FID score is based on the training data, simply choosing the iteration with the best FID score after training has completed would violate privacy. %

Privatizing the FID score requires privatizing the covariance of the output of 
the final pooling layer in the Inception network, which is quite sensitive.
Instead, 
we privatize the first and second moment of data embeddings as in \eqref{feat-map}, but using only the output of  the final pooling layer in the Inception network.
We then use this quantity as a private proxy for FID, and select the iteration with the lowest score.
To minimize the privacy cost, we choose a larger noise parameter than for the main objective:
$\sigma_\mathit{stopping}=10\sigma$, where $\sigma$ is the noise scale for privatizing each part of the data mean embeddings, works well.
Again, we compose these $\sigma$s with the analysis of  \citet{wang2019subsampled}.

\section{THEORETICAL ANALYSIS}
We now bound the effect of adding noise to our loss function,
showing that asymptotically our noise does not hurt the rate at which our model converges to the optimal model.

\suppsecref{proofs} proves full finite-sample versions of all of the following bounds,
which are stated here using $\bigO_p$ notation for simplicity.
The statment $X = \bigO_p(A_n)$ essentially means that
$X$ is $\bigO(A_n)$ with probability at least $1 - \rho$ for any constant choice of failure probability $\rho > 0$.

The full version in the supplementary material is also ambivalent to the choice of covariance for the noise variable $\vn$,
allowing in particular analysis of DP-MEPF based either on one or two moments of PFs.
(The full version gives a slightly more refined treatment of the two-moment case, but the difference is typically not asymptotically relevant.)

To begin, we use standard results on Gaussians to establish that the privatized MMD is close to the non-private MMD:
\begin{prop} \label{prop:err-bound}
Given datasets
$\Dat$ %
and $\tilde{\Dat}$, %
the absolute difference between the privatized and non-private squared MMDs,
a random function of only $\vn$,
satisfies
\[ 
\bigl\lvert \tMMDphisq(\Dat, \tilde\Dat) - \MMDphisq(\Dat, \tilde\Dat) \bigr\rvert
    = \bigO_p\left(
\tfrac{\sigma^2}{m^2} D + \tfrac{\sigma}{m} 
\MMD_{k_\Phi}(\Dat, \tilde\Dat) \right).
\]
\end{prop}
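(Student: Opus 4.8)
The plan is to exploit the fact that, conditioned on the two datasets, the only randomness in $\tMMDphisq(\Dat, \tilde\Dat)$ comes from the Gaussian noise $\vn$, and that the squared-norm structure makes the privatized loss a simple quadratic in $\vn$. Writing $\va = \vmu_\Phi(\Dat) - \vmu_\Phi(\tilde\Dat)$ for the difference of mean embeddings, which is deterministic given the data and satisfies $\MMD_{k_\Phi}(\Dat,\tilde\Dat) = \lVert \va\rVert$, the privatized embedding differs from the non-private one only by the additive noise, so that
\[
\tMMDphisq(\Dat,\tilde\Dat) = \lVert \va + \vn\rVert^2 = \lVert\va\rVert^2 + 2\langle \va, \vn\rangle + \lVert\vn\rVert^2 .
\]
Subtracting $\MMDphisq(\Dat,\tilde\Dat) = \lVert\va\rVert^2$ leaves exactly the two error terms $2\langle\va,\vn\rangle + \lVert\vn\rVert^2$, so the whole problem reduces to controlling each of these in probability.

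For the quadratic term, $\lVert\vn\rVert^2$ is a sum of $\bigO(D)$ independent Gaussians with per-coordinate variance $4\sigma^2/m^2$, hence a scaled $\chi^2$ with mean $\bigO(\sigma^2 D/m^2)$. A standard chi-squared concentration inequality (e.g.\ Laurent--Massart), or even Markov's inequality, then gives $\lVert\vn\rVert^2 = \bigO_p(\sigma^2 D/m^2)$, matching the first term of the bound. For the cross term, I would observe that with $\va$ fixed, $\langle\va,\vn\rangle$ is a univariate mean-zero Gaussian with variance $(4\sigma^2/m^2)\lVert\va\rVert^2$, hence standard deviation $\bigO((\sigma/m)\lVert\va\rVert)$. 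A Gaussian tail bound then yields $\lvert\langle\va,\vn\rangle\rvert = \bigO_p((\sigma/m)\lVert\va\rVert) = \bigO_p((\sigma/m)\MMD_{k_\Phi}(\Dat,\tilde\Dat))$, the second term.

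Combining the two via the triangle inequality and a union bound over the two constant failure probabilities yields the claimed $\bigO_p\!\left(\tfrac{\sigma^2}{m^2}D + \tfrac{\sigma}{m}\MMD_{k_\Phi}(\Dat,\tilde\Dat)\right)$. There is no genuinely hard step: the argument is entirely standard Gaussian concentration once the quadratic expansion is in hand. The only points requiring care are that $\va$ must be treated as deterministic, so that the cross term is \emph{exactly} a scalar Gaussian rather than a product of dependent random quantities, and the bookkeeping of constant factors --- in particular the factor $2$ distinguishing the one-moment ($D$-dimensional $\vn$) from the two-moment ($2D$-dimensional $\vn$) case --- which the $\bigO_p$ notation absorbs, consistent with the remark that the full proof is ambivalent to the exact covariance of $\vn$. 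The finite-sample version in \suppsecref{proofs} then simply replaces each $\bigO_p$ statement with an explicit chi-squared and Gaussian tail inequality evaluated at a chosen failure level $\rho$.
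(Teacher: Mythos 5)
Your proposal is correct and follows essentially the same route as the paper's proof (Proposition A.2 in \suppsecref{proofs}): the quadratic expansion $\lVert \va + \vn \rVert^2 - \lVert \va \rVert^2 = \vn\trp\vn + 2\va\trp\vn$, chi-squared (Laurent--Massart) concentration for $\vn\trp\vn$, a scalar-Gaussian tail for the cross term, and a union bound. The only cosmetic difference is that the paper bounds the cross term via Gaussian Lipschitz concentration to handle a general noise covariance $\Sigma$ uniformly, whereas you use the exact distribution $\va\trp\vn \sim \Nrm(0, \va\trp\Sigma\va)$, which is equivalent here.
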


One key quantity in the bound is $\sigma / m$,
the ratio of the noise scale $\sigma$ (inversely proportional to $\varepsilon$)
to the number of observed (private) data points $m$.
Note that $\sigma$ depends only on the given privacy level, not on $m$,
so the error becomes zero as long as $m \to \infty$.
In the second term,
$\sigma / m$ is multiplied by the (non-private, non-squared) MMD,
which is bounded for our features,
but for good generators (where our optimization hopefully spends most of its time) this term will also be nearly zero.
The other term accounts for adding independent noise to each of the $D$ feature dimensions;
although $D$ is typically large,
so is $m^2$.
Having $m = 50\textsc{k}$ private samples, e.g.\ for CIFAR-10, allows for a strong error bound as long as $D \sigma^2 \ll 625\textsc{m}$.

The above result is for a fixed pair of datasets.
Because we only add noise $\vn$ once, across all possible comparisons,
we can use this to obtain a bound uniform over all possible generator distributions,
in particular implying that the minimizer of the privatized MMD approximately minimizes the original, non-private MMD:
\begin{prop} \label{prop:uniform}
    Fix a target dataset $\Dat$.
    For each $\vtheta$ in some set $\Theta$,
    fix a corresponding $\tilde\Dat_\vtheta$;
    in particular, $\Theta = \mathbb R^p$ could be the set of all generator parameters,
    and $\tilde\Dat_\vtheta$
    either the outcome of running a generator $g_\vtheta$ on a fixed set of ``seeds,'' $\tilde\Dat_\vtheta = \{ g_\vtheta(\vz_i) \}_{i=1}^n$,
    or the full output distribution of the generator $Q_{g_\vtheta}$.
    Let
    $\privtheta \in \argmin_{\theta \in \Theta} \tMMDphisq(\Dat, \tilde\Dat_\theta)$
    be the private minimizer,
    and 
    $\pubtheta \in \argmin_{\theta \in \Theta} \MMDphisq(\Dat, \tilde\Dat_\theta)$
    the non-private minimizer.
    Then
        $\MMDphisq(\Dat, \tilde\Dat_{\privtheta}) - \MMDphisq(\Dat, \tilde\Dat_{\pubtheta})
        = \bigO_p\left( \frac{\sigma^2 D}{m^2} + \frac{\sigma \sqrt D}{m} \right).
$
\end{prop}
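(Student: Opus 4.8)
The plan is to reduce the claim to a \emph{uniform} version of \propref{err-bound} and then apply the standard empirical-risk-minimization comparison. Write $\vmu := \vmu_\Phi(\Dat)$ for the true data mean embedding, $\tilde\vmu := \vmu + \vn$ for its privatized counterpart, and $\vmu_\theta := \vmu_\Phi(\tilde\Dat_\theta)$ for the generator's mean embedding, so that $\MMDphisq(\Dat, \tilde\Dat_\theta) = \|\vmu - \vmu_\theta\|^2$ and $\tMMDphisq(\Dat, \tilde\Dat_\theta) = \|\vmu + \vn - \vmu_\theta\|^2$. First I would expand the latter squared norm and record the exact identity
\[
\begin{aligned}
  \tMMDphisq(\Dat, \tilde\Dat_\theta) &- \MMDphisq(\Dat, \tilde\Dat_\theta) \\
  &= 2\langle \vn,\, \vmu - \vmu_\theta\rangle + \|\vn\|^2 ,
\end{aligned}
\]
valid for every $\theta$ under the single noise draw $\vn$. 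The crucial structural point --- precisely the ``we add noise once'' observation in the text --- is that $\|\vn\|^2$ does not depend on $\theta$, so the only $\theta$-dependent discrepancy is the linear term $2\langle\vn,\,\vmu-\vmu_\theta\rangle$.

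Next I would convert this into a bound uniform over $\Theta$. By Cauchy--Schwarz, $\sup_{\theta\in\Theta}\lvert 2\langle\vn,\,\vmu-\vmu_\theta\rangle\rvert \le 2\|\vn\|\,\sup_{\theta\in\Theta}\|\vmu-\vmu_\theta\|$. Because the perceptual-feature map is normalized ($\|\vphi_1\|_2 = \|\vphi_2\|_2 = 1$, as in \eqref{sensitivity}), every mean embedding lies in a ball of fixed radius, so $\sup_\theta\|\vmu-\vmu_\theta\| = \sup_\theta\MMD_{k_\Phi}(\Dat,\tilde\Dat_\theta)$ is bounded by an absolute constant. Combining this with Gaussian concentration for the single draw $\vn\sim\Nrm(0,\tfrac{4\sigma^2}{m^2}\mathbf I)$ --- namely $\|\vn\|^2 = \bigO_p(\sigma^2 D/m^2)$ by $\chi^2$ concentration and $\|\vn\| = \bigO_p(\sigma\sqrt D/m)$ --- gives
\[
\begin{aligned}
  \sup_{\theta\in\Theta} \bigl\lvert \tMMDphisq(\Dat, \tilde\Dat_\theta) - \MMDphisq(\Dat, \tilde\Dat_\theta)\bigr\rvert
  = \bigO_p\!\left(\tfrac{\sigma^2 D}{m^2} + \tfrac{\sigma\sqrt D}{m}\right),
\end{aligned}
\]
which is just the uniform strengthening of \propref{err-bound}, with the per-$\theta$ MMD factor in the cross term replaced by its uniform constant bound.

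Finally I would run the textbook comparison. Writing $H(\theta) := \MMDphisq(\Dat, \tilde\Dat_\theta)$ and $h(\theta) := \tMMDphisq(\Dat, \tilde\Dat_\theta)$, and using that $\privtheta$ minimizes $h$ while $\pubtheta$ minimizes $H$, I would decompose
\[
\begin{aligned}
  H(\privtheta) - H(\pubtheta)
  &= \bigl[H(\privtheta) - h(\privtheta)\bigr]
   + \bigl[h(\privtheta) - h(\pubtheta)\bigr] \\
  &\quad + \bigl[h(\pubtheta) - H(\pubtheta)\bigr] .
\end{aligned}
\]
The middle bracket is $\le 0$ by optimality of $\privtheta$, and each outer bracket is at most $\sup_\theta\lvert h(\theta) - H(\theta)\rvert$; applying the uniform bound from the previous step to both outer brackets yields the stated rate. (The left-hand side is automatically $\ge 0$ by optimality of $\pubtheta$, so the bound controls a genuine nonnegative excess.)

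I expect the only real subtlety to be the uniformity step: one must check that a \emph{single} realization of $\vn$ controls all comparisons simultaneously, which is exactly what lets Cauchy--Schwarz together with the bounded feature map replace the per-$\theta$ MMD factor of \propref{err-bound} by a uniform constant. The remaining ingredients --- expanding the squared norms and the $\chi^2$/Gaussian tail bounds giving $\|\vn\| = \bigO_p(\sigma\sqrt D/m)$ --- are routine, with finite-sample constants deferred to \suppsecref{proofs}. I note that a sharper argument observes the $\|\vn\|^2$ terms actually cancel in $h(\privtheta) - h(\pubtheta)$ after subtracting $H$, leaving only the $\sigma\sqrt D/m$ term; I retain both terms to match the form of \propref{err-bound}.
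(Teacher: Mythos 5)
Your proposal is correct and takes essentially the same route as the paper's own proof (\propref{unif-error} followed by \propref{min-empirical}): a uniform control of $\sup_{\vtheta}\bigl\lvert \tMMDphisq(\Dat,\tilde\Dat_\vtheta)-\MMDphisq(\Dat,\tilde\Dat_\vtheta)\bigr\rvert$ via the exact quadratic expansion, Cauchy--Schwarz with the norm-bounded feature map, and Gaussian/$\chi^2$ concentration for the single draw $\vn$, followed by the standard ERM sandwich whose middle bracket is nonpositive by optimality of $\privtheta$. Your closing remark that the $\lVert\vn\rVert^2$ terms cancel in the decomposition, leaving only the $\sigma\sqrt{D}/m$ term, is a valid minor sharpening the paper does not state at this point, exploiting the same single-noise-draw structure the paper instead uses in its ``optimistic'' \propref{min-optimistic}.
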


The second term of this bound will generally dominate;
it arises from uniformly bounding the $\frac{\sigma}{m} \MMDphi(\Dat, \tilde\Dat_\vtheta)$ term of \propref{err-bound} over all possible $\tilde\Dat_\vtheta$.
This approach,
although the default way to prove this type of bound,
misses that $\MMDphi(\Dat,\tilde\Dat_\vtheta)$ is hopefully small for $\privtheta$ and $\pubtheta$.
We can in fact take advantage of this to provide an ``optimistic'' rate \citep{optimistic-smooth,optimistic-linear}
that achieves faster convergence if the generator is capable of matching the target features (an ``interpolating'' regime):
\begin{prop}
    In the setting of \propref{uniform},
\[        \MMDphisq(\Dat, \tilde\Dat_{\privtheta}) - \MMDphisq(\Dat, \tilde\Dat_{\pubtheta})
        = \bigO_p\left( \frac{\sigma^2 D}{m^2} + \frac{\sigma \sqrt{D}}{m} \MMDphi(\Dat, \tilde\Dat_{\pubtheta}) \right).
\]    %
\end{prop}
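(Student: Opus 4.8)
The plan is to reuse the first-order optimality decomposition from the proof of \propref{uniform}, but instead of crudely bounding the fluctuation term by a quantity uniform over $\Theta$, to \emph{self-bound} it in terms of the very objective values we are trying to control, which produces the faster ``optimistic'' rate. Write $f(\vtheta) := \MMDphisq(\Dat, \tilde\Dat_\vtheta) = \lVert a_\vtheta \rVert^2$ with the shorthand $a_\vtheta := \vmu_\phi(\Dat) - \vmu_\phi(\tilde\Dat_\vtheta)$, and recall that privatization adds a single noise vector $\vn$ to $\vmu_\phi(\Dat)$, so that $\tMMDphisq(\Dat, \tilde\Dat_\vtheta) = \lVert a_\vtheta + \vn \rVert^2 = f(\vtheta) + 2\langle a_\vtheta, \vn\rangle + \lVert \vn \rVert^2$.

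First I would exploit that $\privtheta$ minimizes the privatized objective, giving $\lVert a_{\privtheta} + \vn \rVert^2 \le \lVert a_{\pubtheta} + \vn \rVert^2$; expanding both sides, the $\lVert \vn \rVert^2$ terms cancel exactly, leaving $f(\privtheta) - f(\pubtheta) \le 2\langle a_{\pubtheta} - a_{\privtheta}, \vn\rangle$. The key observation is that $a_{\pubtheta} - a_{\privtheta} = \vmu_\phi(\tilde\Dat_{\privtheta}) - \vmu_\phi(\tilde\Dat_{\pubtheta})$, so by the triangle inequality (passing through $\vmu_\phi(\Dat)$) its norm is at most $\MMDphi(\Dat, \tilde\Dat_{\privtheta}) + \MMDphi(\Dat, \tilde\Dat_{\pubtheta}) = \sqrt{f(\privtheta)} + \sqrt{f(\pubtheta)}$. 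Applying Cauchy--Schwarz then gives $f(\privtheta) - f(\pubtheta) \le 2\lVert\vn\rVert\bigl(\sqrt{f(\privtheta)} + \sqrt{f(\pubtheta)}\bigr)$.

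Next I would factor the left-hand side as a difference of squares, $f(\privtheta) - f(\pubtheta) = \bigl(\sqrt{f(\privtheta)} - \sqrt{f(\pubtheta)}\bigr)\bigl(\sqrt{f(\privtheta)} + \sqrt{f(\pubtheta)}\bigr)$. Whenever the common factor $\sqrt{f(\privtheta)} + \sqrt{f(\pubtheta)}$ is positive (if it vanishes both MMDs are zero and the claim is trivial) it cancels cleanly, yielding $\sqrt{f(\privtheta)} \le \sqrt{f(\pubtheta)} + 2\lVert\vn\rVert$. Squaring gives $f(\privtheta) - f(\pubtheta) \le 4\lVert\vn\rVert\,\MMDphi(\Dat, \tilde\Dat_{\pubtheta}) + 4\lVert\vn\rVert^2$. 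Finally I would invoke standard Gaussian-norm concentration (a chi-squared tail bound on $\lVert\vn\rVert^2$, whose coordinates have variance $\tfrac{4\sigma^2}{m^2}$) to conclude $\lVert\vn\rVert = \bigO_p(\sigma\sqrt{D}/m)$ and $\lVert\vn\rVert^2 = \bigO_p(\sigma^2 D/m^2)$, which substituted above yields exactly the stated bound.

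The main obstacle is conceptual rather than computational: recognizing that one should \emph{not} discard the dependence of the fluctuation term on $\privtheta$ and $\pubtheta$, as the proof of \propref{uniform} does when it bounds $\lVert a_{\pubtheta} - a_{\privtheta}\rVert$ by a uniform constant and thereby loses the optimistic factor. The delicate step is instead keeping $\sqrt{f(\privtheta)}$ on both sides and arranging the algebra so that the common factor cancels rather than leaving a quadratic to solve; the difference-of-squares factorization is precisely what makes this cancellation exact and avoids any AM--GM slack. A minor point to verify carefully is that $\lVert a_{\pubtheta} - a_{\privtheta}\rVert$ is genuinely controlled by the two MMD values through the triangle inequality, so that no union bound or covering argument over $\Theta$ is ever needed here.
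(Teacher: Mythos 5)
You prove the statement correctly, but by a genuinely different---and tidier---route than the paper's own argument (\propref{min-optimistic} in the supplement). The paper bounds $\bigl\lvert \tMMDsq(\vtheta) - \MMDhatsq(\vtheta) \bigr\rvert \le \vz\trp \Sigma \vz + 2 \MMDhat(\vtheta)\, \lVert \Sigma^{1/2}\vz \rVert$ separately at $\privtheta$ and at $\pubtheta$, chains three inequalities through the privatized objective, and so arrives at a quadratic inequality $\MMDhatsq(\privtheta) - \beta \MMDhat(\privtheta) - \gamma \le 0$ that it resolves by the quadratic formula, followed by some bookkeeping (bounding $\gamma$ by a square, $6 + 2\sqrt 2 < 9$). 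You instead compare the two privatized objectives directly, so the $\lVert \vn \rVert^2$ terms cancel exactly---a cancellation the paper's chaining forgoes, which is why it pays roughly $9\, \vz\trp\Sigma\vz$ where you pay $4 \lVert \vn \rVert^2$---leaving $f(\privtheta) - f(\pubtheta) \le 2 \langle a_{\pubtheta} - a_{\privtheta}, \vn \rangle$ in your notation; the triangle inequality through $\vmu_\phi(\Dat)$ plus Cauchy--Schwarz then yields the same self-bounding structure, and your difference-of-squares factorization cancels the common factor $\sqrt{f(\privtheta)} + \sqrt{f(\pubtheta)}$ exactly, sidestepping the quadratic-root step entirely (your case distinction when the factor vanishes is handled correctly, and division is valid irrespective of the sign of $\sqrt{f(\privtheta)} - \sqrt{f(\pubtheta)}$). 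Both proofs share the one structural point that matters: every inequality holds pointwise in $\vn$, so the dependence of $\privtheta$ on the noise costs nothing and concentration is invoked only once, on $\lVert \vn \rVert$---identical to the paper's $\vz\trp\Sigma\vz$ and $\lVert \Sigma^{1/2}\vz \rVert$ when $\Sigma = \tfrac{4\sigma^2}{m^2} I_D$---so, as you note, no union bound or covering of $\Theta$ is needed in either version. Your argument also generalizes verbatim to the supplement's general-$\Sigma$ setting, since the cancellation of $\lVert \vn \rVert^2 = \vz\trp\Sigma\vz$ is exact for any covariance and $\Em \lVert \vn \rVert^2 = \Tr(\Sigma)$ recovers the $\bigO_p\bigl( \Tr(\Sigma) + \sqrt{\Tr(\Sigma)}\, \MMDphi(\Dat, \tilde\Dat_{\pubtheta}) \bigr)$ form, with slightly better constants than the published proof: $4 \lVert \vn \rVert \MMDhat(\pubtheta) + 4 \lVert \vn \rVert^2$ versus the paper's $4 \lVert \Sigma^{1/2}\vz \rVert \MMDhat(\pubtheta) + (6 + 2\sqrt{2})\, \vz\trp\Sigma\vz$.
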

Note that this bound implies the previous one,
since $\MMDphi(\Dat, \tilde\Dat)$ is bounded.
But in the case where the generator is capable of exactly matching the features of the target distribution,
the second term becomes zero,
and the rate with respect to $m$ is greatly improved.

In either regime,
our approximate minimization of the empirical MMD
is far faster than the rate at which minimizing the empirical $\MMD(\Dat, Q_{g_\vtheta})$ converges to minimizing the true, distribution-level $\MMD(P, Q_{g_\vtheta})$:
the known results there \citep[e.g.][Theorem 1]{mmd-min} give a $1 / \sqrt{m}$ rate,
compared to our $1/m$ or even $1/m^2$.

We show that minimizing DP-MEPF's loss
actually pays \emph{no} asymptotic penalty for privacy (especially when a perfect generator exists),
with the privacy loss dwarfed by the statistical error for large datasets;
this essentially agrees with experiments (see \secref{exp}).
This is not the case for all DP methods,
and other DP generation papers didn't prove any such guarantees:
DP-Sinkhorn only proved privacy,
and DP-MERF showed only a much weaker guarantee (its gradient is asymptotically unbiased).

\section{RELATED WORK}

Initial work on 
differentially private data generation
assumed strong constraints on the type of data and the intended use of the released data
\citep{snoke2018pmse, Mohammed_DPDR_for_data_mining, Xiao_DPDR_trough_partitioning, Hardt_simple_practical_DPDR, Zhu_DPDP_survey}. While these studies provide theoretical guarantees on the utility of the synthetic data, they typically do not scale to our goal of large-scale image data generation. 

Recently, several papers focused on discrete data generation with limited domain size  \citep{privbayes, priview, chen2015dpdatapublication, zhang2021privsyn}. These methods learn the correlation structure of small subsets of features and privatize them in order to produce differentially private synthetic data samples.
These methods often require discretization of the data and have limited scalability, so are also unsuitable for high-dimensional image data generation.  

More recently, however,  a new line of work has emerged that adopt the core ideas from the recent advances in deep generative models for a broad applicability of synthetic data with differential privacy constraints. 
The majority of this work \citep{DPGAN, DP_CGAN, DBLP:conf/sec/FrigerioOGD19, PATE_GAN, gs-wgan} uses generative adversarial networks \citep[GANs;][]{GAN}  along with some form of DP-SGD \citep{DP_SGD}. 
 Other works in this line include PATE-GAN based on the private aggregation of teacher ensembles \citep{papernot:private-training} and variational autoencoders \citep{acs2018differentially}.

The closest prior work to the proposed method is DP-MERF \citep{dpmerf}, where the kernel mean embeddings are constructed using random Fourier features \citep{rahimi2008random}.
A recent variant of DP-MERF uses Hermite polynomial-based mean embeddings \citep{dphp}. Unlike these methods, we use the perceptual features from a pre-trained network to construct kernel mean embeddings. 
Neither previous method applies to the perceptual kernels used here,
so their empirical results are far worse (as we'll see shortly).
Our theoretical analysis is also much more extensive:
they only proved a bound on the expected error between the private and non-private empirical MMD for a fixed pair of datasets.

More recently, a similar work to DP-MERF utilizes the Sinkhorn divergence for private data generation \citep{dp_sinkhorn}, which performs similarly to DP-MERF when the cost function is the L2 distance with a large regularizer. Another related work proposes to use the characteristic function and an adversarial
re-weighting objective
\citep{liew2022pearl}
in order to improve the generalization capability of DP-MERF. 

A majority of these related methods
were evaluated only on relatively simple datasets such as MNIST and FashionMNIST.
Even so, the DP-GAN-based methods mostly require a large privacy budget of $\epsilon \approx 10$ to generate synthetic data samples that are reasonably close to the real data samples.
Our method goes far beyond this quality with much more stringent privacy constraints,
as we will now see.

\section{EXPERIMENTS}\label{sec:exp}

We will now compare our method to state-of-the-art methods for DP data generation.
\begin{table*}[htb]
\caption{Downstream accuracies by Logistic regression and MLP, evaluated on the generated data samples using MNIST and FashionMNIST as private data and SVHN and CIFAR-10 as public data, respectively. In all cases, we set $\epsilon=10, \delta=10^{-5}$. In our method, we used both features $\vphi_1, \vphi_2$.}
\vskip 0.15in
\centering
\scalebox{0.8}{
\begin{tabular}{ll|c|cccc}
\toprule
& & \textbf{DP-MEPF} & DP-Sinkhorn & GS-WGAN & DP-MERF & DP-HP \\
& &  & \citep{dp_sinkhorn} & \citep{gs-wgan} & \citep{dpmerf} & \citep{dphp} \\
\midrule
\multirow{2}{*}{MNIST} & LogReg & \textbf{83} & 83 & 79 & 79 & 81 \\
& MLP & \textbf{90} & 83 & 79 & 78 & 82 \\
\midrule
\multirow{2}{*}{F-MNIST} & LogReg & \textbf{76} & 75 & 68 & 76 & 73 \\
& MLP & \textbf{76} & 75 & 65 & 75 & 71 \\
\bottomrule
\end{tabular}\label{tab:mnist_eps10}}
\end{table*}

\textit{Datasets.}
We considered four image datasets\footnote{Dataset licenses: MNIST: CC BY-SA 3.0; FashionMNIST:MIT; CelebA: see \url{https://mmlab.ie.cuhk.edu.hk/projects/CelebA.html}; Cifar10: MIT} of varying complexity. We started with the commonly used datasets MNIST \citep{lecun2010mnist} and FashionMNIST \citep{FashionMNIST}, where each consist of 60,000 $28 \times 28$ pixel grayscale images depicting hand-written digits and items of clothing, respectively, sorted into 10 classes. 
We also looked at the more complex CelebA \citep{liu2015faceattributes} dataset, containing 202,599 color images of faces which we scale to sizes of $32 \times 32$ or $64 \times 64$ pixels and treat as unlabeled.
We also study CIFAR-10 \citep{krizhevsky2009learning}, a 50,000-sample dataset containing $32 \times 32$ color images of 10 classes of objects, including vehicles like ships and trucks, and animals such as horses and birds.

\textit{Implementation.}
We implemented our code for all the  experiments in PyTorch \citep{pytorch}, using the \texttt{auto-dp} package\footnote{\url{https://github.com/yuxiangw/autodp}} \citep{wang2019subsampled} for the privacy analysis.
Following \citet{dpmerf}, we used the generator that consists of two fully connected layers followed by two convolutional layers with bilinear upsampling, for generating both MNIST and FashionMNIST datasets. 
For MNIST, we used the SVHN dataset as public data to pre-train ResNet18 \citep{he2016deep}, from which we took the perceptual features.
For FashionMNIST, we used perceptual features from a ResNet18 trained on CIFAR-10. %
For CelebA and CIFAR-10, 
we followed \citet{perceptual_features}
in using perceptual features from a pre-trained VGG  \citep{simonyan2014very} on ImageNet,
and a ResNet18-based generator.
Further implementation details are given in the supplementary material,
which also studies how different public datasets and feature extractors impact the performance. 

\textit{Evaluation metric.}
Evaluating the quality of generated data is a challenging problem of its own. We use two conventional measures. The first is the \textit{Frechet Inception Distance (FID)} score \citep{heusel2017gans}, which directly measures the quality of the generated samples. The FID score correlates with human evaluations of visual similarity to the real data, and is commonly used in deep generative modelling. We computed FID scores with the \texttt{pytorch\_fid} package \citep{Seitzer2020FID}, based on $5\,000$ generated samples, matching \citet{perceptual_features}.
As discussed in \secref{early-stopping}, we use a private proxy for FID for early stopping, while the FID scores we report in this section are non-DP measures of our final model for fair comparison to other existing methods.
The second metric we use is the accuracy of downstream classifiers, trained on generated datasets and then test on the real data test sets \citep[used by][]{gs-wgan, DP_CGAN, PATE_GAN,gs-wgan, dpmerf, dp_sinkhorn}. This test accuracy indicates how well the downstream classifiers generalize from the synthetic to the real data distribution and thus, the utility of using synthetic data samples instead of the real ones.
We computed the downstream accuracy on MNIST and FashionMNIST using the logistic regression and MLP classifiers from scikit-learn \citep{scikit-learn}. For CIFAR-10, we used ResNet9 taken from FFCV\footnote{\url{https://github.com/libffcv/ffcv/blob/main/examples/cifar/train_cifar.py}} \citep{leclerc2022ffcv}.

In all experiments, we tested non-private training and settings with various levels of privacy, ranging from $\epsilon=10$ (no meaningful guarantee) to $\epsilon=0.2$ (strong privacy guarantee). 
We set $\delta=10^{-5}$ for MNIST, FashionMNIST, and Cifar10 and $\delta=10^{-6}$ for CelebA.
In DP-MEPF, we also tested cases based on embeddings with only the first moment, written $(\vphi_1)$, and using the first two moments, written $(\vphi_1, \vphi_2)$.
Each value in all tables is an average of 3 or more runs; 
standard deviations are in the supplementary material.

Since we are unaware of any prior work on DP data generation for image data using auxiliary datasets, we instead mostly compare to recent methods which do not access auxiliary data. As expected, due to the advantage of non-private data our approach outperforms these methods by a significant margin on the more complex datasets. As a simple baseline based on public data, we also pretrain a GAN on a downscaled version of ImageNet, at $32 \times 32$, and fine-tune this model with DP-SGD on CelebA and Cifar10. We use architectures based on ResNet9 with group normalization \citep{wu2018group} for both generator and discriminator. As suggested by \citet{dpgans_revisited}, we update the generator at a lower frequency than the discriminator and use increased minibatch sizes. Further details can be found in the supplementary material.
\begin{table*}[tb]
\caption{Downstream accuracies of our method for MNIST  and FashionMNIST  at  varying values of $\epsilon$.}
\centering
\scalebox{0.8}{
\begin{tabular}{llcccc|cccc}
& & \multicolumn{4}{ c| }{\textbf{MNIST}} & \multicolumn{4}{ c }{\textbf{FashionMNIST}} \\
&     & $\epsilon=5$ & $\epsilon=2$ & $\epsilon=1$ & $\epsilon=0.2$  & $\epsilon=5$ & $\epsilon=2$ & $\epsilon=1$ & $\epsilon=0.2$  \\
              \hline
  \multirow{2}{*}{MLP} & DP-MEPF $(\vphi_1,\vphi_2)$     & 90    & 89    & 89    & 80   &   76 &  75 &  75  &   70   \\
& DP-MEPF ($\vphi_1$)           & 88    & 88    & 87    & 77      &  75    &   76   &   75   &  69  \\
 \hline
\multirow{2}{*}{LogReg}  & DP-MEPF $(\vphi_1,\vphi_2)$     & 83 & 83   & 82 & 76 & 75  &   76  & 75  & 73  \\
  & DP-MEPF ($\vphi_1$)   & 81 & 80    & 79  & 72   & 75  &  76   &  76  &  72 \\
 \hline
\end{tabular}
\label{tab:mnist_var_eps}
}
\end{table*}

\begin{figure}[t]
\centering
\includestandalone[mode=buildnew, scale=0.8]{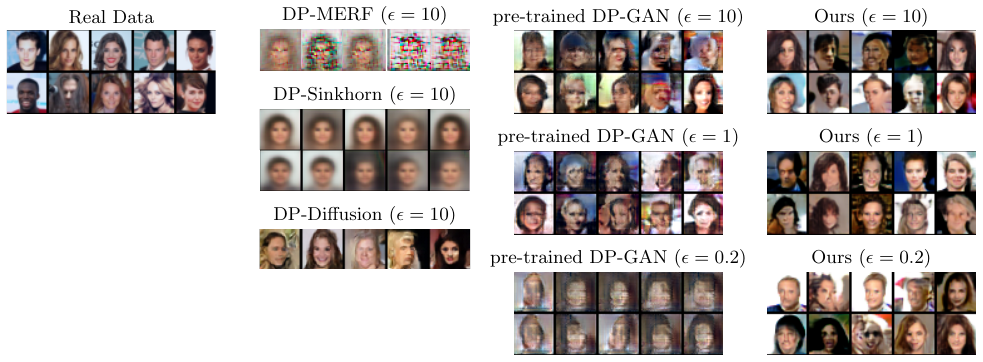}
\caption{
Synthetic $32 \times 32$ CelebA samples generated at different levels of privacy. Samples for DP-MERF and DP-Sinkhorn are taken from \citet{dp_sinkhorn} and DP-Diffusion samples are taken from \cite{dockhorn2022differentially}. The pre-trained GAN is our baseline utilizing public data. Even at $\epsilon=0.2$, DP-MEPF ($\vphi_1, \vphi_2$) yields samples of higher visual quality than the comparison methods.}
\label{fig:celeba_samples}
\end{figure}

\begin{figure}[htb]
\centering
\includestandalone[mode=buildnew, scale=0.5]{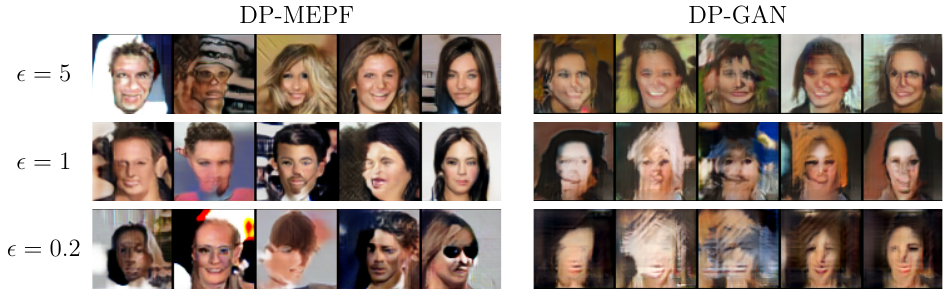}
\caption{
Synthetic $64 \times 64$ CelebA samples generated at different levels of privacy with DP-MEPF $(\vphi_1, \vphi_2)$.}
\label{fig:celeba64_samples}
\end{figure}

\begin{table}[tb]
\caption{CelebA FID scores (lower is better) for images of resolution $32 \times 32$ and $64 \times 64$. Results for DP Diffusion (DPDM) and DP Sinkhorn taken from \cite{dockhorn2022differentially} and \cite{dp_sinkhorn}.}
\centering
\scalebox{0.8}{
\begin{tabular}{llcccccc}
&    & $\epsilon=10$  & $\epsilon=5$ & $\epsilon=2$ & $\epsilon=1$ & $\epsilon=0.5$ & $\epsilon=0.2$ \\
              \hline
  \multirow{5}{*}{32} 
& DP-MEPF $(\vphi_1,\vphi_2)\!\!\!$   & 17.4 & 17.5 & 18.1 & 19.0 & 21.4 & 25.8   \\
& DP-MEPF $(\vphi_1)$     &  16.3 & 16.9 & 16.5 & 17.2 & 21.8 &  25.5  \\
& DP-GAN (pre-trained)      &  58.1   & 66.9   & 67.1   & 81.3     & 109.1   & 192.0 \\
& DPDM (no public data)   & 21.2  &  -  &  -  &  71.8 &  -   & - \\
& DP Sinkhorn (no public data)  & 189.5  &  -  &  -  &  - &  -   & - \\
 \hline
\multirow{3}{*}{64}  
& DP-MEPF $(\vphi_1,\vphi_2)\!\!\!$     &  18.5   &  19.1  &  18.4   & 19.0      &  21.4    & 26.8 \\
  & DP-MEPF ($\vphi_1$)           &  17.4   &  16.5   &  16.9   &  18.4   & 20.4    & 27.7  \\
& DP-GAN (pre-trained)     & 57.1  & 62.3   & 65.2  &  72.5   &  91.9  & 133.3 \\
 \hline
\end{tabular}
}
\label{tab:celeba_fid}
\end{table}

\paragraph{MNIST and FashionMNIST.}

We compare DP-MEPF to existing methods on the most common settings used in the literature,
MNIST and FashionMNIST at $\epsilon = 10$,
in \tabref{mnist_eps10}.
For an MLP on MNIST, DP-MEPF's samples far outperform other methods
for logistic regression and both classifiers on FashionMNIST, scores match or slightly exceed those of existing models.
This might be 
because the domain shift between public dataset (CIFAR-10, color images of scenes) and private dataset (FashionMNIST, grayscale images of fashion items) is too large, or because the task is simple enough that random features as found in DP-MERF or DP-HP are already good enough.
This will change as we proceed to more complex datasets.
\tabref{mnist_var_eps} shows that downstream test accuracy only starts to drop in high privacy regimes, $\epsilon<1$, due to the low sensitivity of $\vmu_\phi$. Samples for visual comparison between methods are included in the supplementary material.

\paragraph{CelebA}

\begin{wrapfigure}{r}{0.3\textwidth}
\vspace{-15pt}
\begin{center}
\includestandalone[mode=buildnew, scale=0.6]{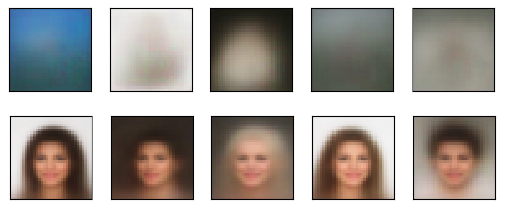}
\end{center}
\vspace{-6pt}
\caption{Samples from non-DP Sinkhorn. Top: ImageNet32. Bottom: CelebA after pretraining.}
\vspace{-5pt}
\label{fig:sinkhorn}
\end{wrapfigure}

\figref{celeba_samples} shows that previous attempts to generate CelebA samples without auxiliary data using DP-MERF or DP-Sinkhorn have only managed to capture very basic features of the data.
Each sample depicts a face, but offers no details or variety.
DP-MEPF produces more accurate samples at the same $32 \times 32$ resolution, which is also reflected in improved FID scores of around 17, while DP-Sinkhorn, as reported in \cite{dp_sinkhorn}, achieves an FID of 189.5. 
\tabref{celeba_fid} gives FID scores for both resolutions at varying $\epsilon$. DP-MEPF consistently outperforms our pre-trained DP-GAN baseline and the scores reported for DP diffusion \citet{dockhorn2022differentially}, 
As the dataset has over $200\,000$ samples, the feature embeddings have low sensitivity, and offer similar quality between $\epsilon=10$ and $\epsilon=1$,
although quality begins to decline at $\epsilon < 1$.
Samples for $64 \times 64$ images are shown in \figref{celeba64_samples},
with similar quality, and a quicker loss of quality in high privacy settings due to its larger embedding. In all cases, the $\phi_1$ embedding yields better results than $\phi_1, \phi_2$, suggesting that the second moment does not contribute useful information, perhaps because on the limited variance of the dataset.

Because DP-Sinkhorn is the best-performing method without public data, we perform experiments on DP-Sinkhorn, pretraining it non-DP on ImageNet32 and fine-tuning with DP on CelebA ($\epsilon=10$). 
After seeing no improvement, we tested non-DP fine-tuning and still saw no improvements beyond what is shown in \figref{sinkhorn}; we tried both BigGan- and ResNet18-based generators with hyperparameter grid searches. DP-Sinkhorn only compares features at image-level, without domain-specific priors,  and it appears that even non-DP the method is not powerful enough to model image data beyond MNIST. (A DP-MEPF analogue that extracts features learned from public data might help, but this would be a novel method beyond scope for comparison.) DP-MERF is similarly limited by its random features, not DP noise, as shown by non-DP versions matching $\epsilon = 10$ performance.

\textbf{Differentially private early stopping.}
For CelebA and Cifar10, we use DP early stopping as explained in \secref{early-stopping} with a privacy parameter ten times larger than the $\sigma$ used for the training objective. Keeping $(\epsilon, \delta)$ fixed, this additional release results only in a small increase in $\sigma$, and gives us a simple way for choosing the best iteration. In \tabref{celeba64_stopping_fid}, we compare the true best FID, the FID picked by our private proxy, and the FID at the end of training to illustrate the advantage in high DP settings. FID scores were computed every $5\,000$ iterations, while the model trained for $200\,000$ iterations in total.

\begin{table}[tb]
\caption{Two examples of beneficial early stopping: For CelebA at $64 \times 64$ resolution and labeled Cifar10, DP-MEPF $(\vphi_1)$ sample quality (measured in FID) degrades with long training in high privacy settings (here $\epsilon \leq 1$). This makes the final model at the end of training a poor choice. Our DP selection of the best iteration via proxy stays close to the optimal choice.}
\centering
\scalebox{0.8}{
\begin{tabular}{llccc}
&    & $\epsilon=1$ & $\epsilon=0.5$ & $\epsilon=0.2$ \\
              \hline
  \multirow{3}{*}{CelebA $64 \times 64$} & Best FID (not DP)       & 17.7 & 20.1 & 27.0 \\
                      & DP proxy for FID    & 18.4 & 20.4 & 27.7  \\
                      & At the end of training  &  18.4   &  22.1  & 45.2 \\
 \hline
  \multirow{3}{*}{Cifar10 (labeled)} & Best FID (not DP)       & 54.8  & 92.0 & 268.3 \\
                                      & DP proxy for FID  &  56.5 & 92.0 & 268.3 \\
                                      & At the end of training   &  198.6 & 267.7 & 357.1 \\
 \hline %
\end{tabular}
\label{tab:celeba64_stopping_fid}
}
\end{table}

\paragraph{CIFAR-10}

\begin{figure*}[t]
\centering
    \includestandalone[mode=buildnew, scale=1.]{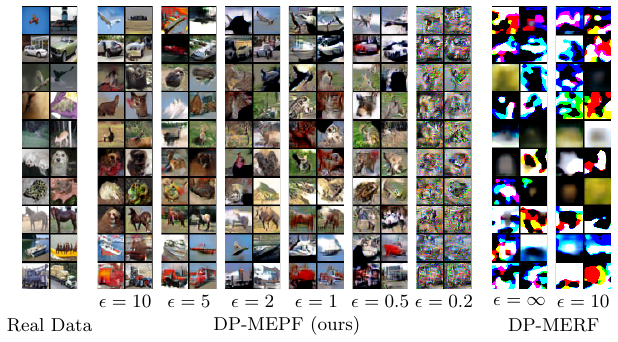}
    \caption{
    Labeled samples from DP-MEPF $(\vphi_1, \vphi_2)$ and DP-MERF \citep{dpmerf}.}
    \label{fig:cifar_labeled_samples}
\vspace{-0.3cm}
\end{figure*}

\begin{figure}[t]
\centering
    \includestandalone[mode=buildnew, scale=1.]{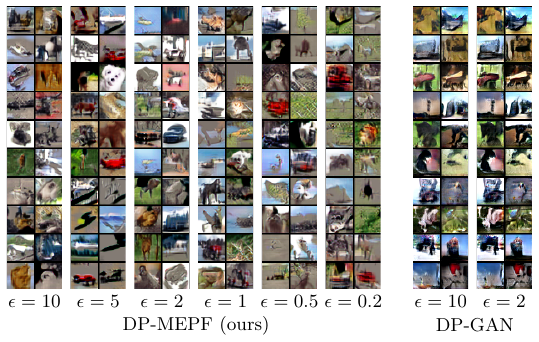}
    \caption{
    Unlabeled CIFAR-10 samples from DP-MEPF $(\vphi_1, \vphi_2)$ and DP-GAN.}
    \label{fig:cifar_unlabeled_samples}
\end{figure}

\begin{table}[tb]
\caption{FID scores for synthetic CIFAR-10 data; labeled generates both labels and images.}
\centering
\scalebox{0.7}{
\begin{tabular}{llcccccc}
&    & $\epsilon=10$  & $\epsilon=5$ & $\epsilon=2$ & $\epsilon=1$ & $\epsilon=0.5$ & $\epsilon=0.2$ \\
              \hline
  \multirow{3}{*}{unlabeled} 
& DP-MEPF $(\vphi_1,\vphi_2)$        &  38.8  &   37.0  &   38.7  &    43.0   &  49.4   & 67.3 \\
& DP-MEPF $(\vphi_1)$                &  38.5  &  38.6    &  40.1  &   45.1    &  49.8   & 72.3 \\
& DP-GAN            &  54.6  &  54.7   &  62.4   &  74.9     &   62.7   & 73.4  \\
 \hline
\multirow{2}{*}{labeled}  
& DP-MEPF $(\vphi_1,\vphi_2)$   &   29.1   &  30.0   & 39.5    &  54.0   &  76.4      & 226.0 \\
& DP-MEPF ($\vphi_1$)           &  30.3  &  35.6   &  42.0   &   56.5    &  92.0    & 268.3 \\
\hline
\end{tabular}
\label{tab:cifar_fid}
}
\end{table}

Finally, we investigate a dataset which has not been covered in DP data generation. While CelebA depicts a centered face in every image, CIFAR-10 includes 10 visually distinct object classes, which raises the required minimum quality of samples to somewhat resemble the dataset. At only $5\,000$ samples per class, the dataset is also significantly smaller, which poses a challenge in the private setting.

\figref{cifar_labeled_samples} shows that DP-MEPF is capable of producing labelled private data (generating both labels and input images together) resembling the real data, but the quality does suffer in high privacy settings. This is also reflected in the FID scores (\tabref{cifar_fid}): at $\epsilon\leq1$ labeled DP-MEPF scores deteriorate at a much quicker rate than the unlabeled counterpart. As the unlabeled embedding dimension is smaller by a factor of 10 (the number of classes), it is easier to release privately and retains some semblance of the data even in the highest privacy settings, as shown in \figref{cifar_unlabeled_samples}. 
The FID scores of our pre-trained DP-GAN baseline consistently exceed our results, usually by over 10 points. These scores are better than the DP-GAN results for CelebA, likely because $32 \times 32$ ImageNet is very similar to Cifar10. Nonetheless, the high privacy cost of DP-SGD makes DP-GAN a poor fit for a dataset of this complexity and limited size.

In \tabref{cifar_labeled_acc} we show the test accuracy of models trained synthetic datasets applied to real data. While there is still a large gap between the 88.3\% accuracy on the real data and our results, DP-MEPF achieves nontrivial results around 50\% for $\epsilon=10$, which degrade as privacy is increased.
While the drop in sample quality due to high privacy is quite substantial, it is less of a problem in the unlabelled case, since our embedding dimension is smaller by a factor of 10 (the number of classes) and thus easier to release privately.

\begin{table}[h]
\centering
\caption{Test accuracies (higher is better) of ResNet9 trained on CIFAR-10 synthetic data with varying privacy guarantees. When trained on real data, test accuracy is 88.3\%}
\scalebox{0.8}{
\begin{tabular}{lcccccc}
           &  $\epsilon=10$ & $\epsilon=5$ & $\epsilon=2$ & $\epsilon=1$ & $\epsilon=0.5$ & $\epsilon=0.2$  \\
 \hline
\textbf{DP-MEPF ($\vphi_1, \vphi_2$)} & 53.0  & 43.9   & 40.0  & 28.5  & 18.0  & 16.2     \\
\textbf{DP-MEPF ($\vphi_1$)}          & 40.7 &  32.3 &  42.6 & 33.2 & 18.8  &  15.3    \\
DP-MERF      & 13.2  & 13.4   & 13.5  & 13.8  & 13.1  & 10.4    \\
\hline
\end{tabular}\label{tab:cifar_labeled_acc}}
\end{table}

\section{DISCUSSION}
We have demonstrated the advantage of using auxiliary public data in DP data generation.
Our method DP-MEPF takes advantage of features from pre-trained classifiers that are readily available, and allows us to tackle datasets like CelebA and CIFAR-10, which have been unreachable for private data generation up to this point.

There are several avenues to extend our method in future work,
in particular finding better options for the encoder features:
the choice of VGG19 by \citet{perceptual_features} works well in private settings,
but a lower-dimensional embedding that still works well for training generative models
-- perhaps based on some kind of pruning scheme --
might help reduce the sensitivity of $\vmu_\phi$ and improve quality. %

Training other generative models such as GANs or VAEs with pretrained components is also exploring further than our initial attempt here.
It may also be possible to take a ``middle ground'' and introduce some adaptation for features in DP-MEPF, to allow for more powerful, GAN-like models, without suffering too much privacy loss.
In the non-private generative modelling community,
this has proved important,
but the challenge will be to do so while limiting the number of DP releases to allow modelling with, e.g., $\epsilon \le 2$.

\section{ACKNOWLEDGEMENTS}
We thank the anonymous reviewers for their valuable time helping us improve our manuscript.

F.\ Harder is supported by the Max Planck Society and the Gibs Sch{\"u}le Foundation and the Institutional Strategy of the University of T{\"u}bingen (ZUK63) and the German Federal Ministry of Education and Research (BMBF): T\"ubingen AI Center, FKZ: 01IS18039B.
F.\ Harder is also grateful for the support of the International Max Planck Research School for Intelligent Systems (IMPRS-IS).

M.\ Jalali Asadabadi, M.\ Park, and D.\ J.\ Sutherland are supported in part by the Natural Sciences and Engineering Research Council of Canada (NSERC) and the Canada CIFAR AI Chairs program.

\section{BROADER IMPACT STATEMENT}
Our work is motivated by the need for strong and scalable data privacy, which we expect will have mainly beneficial societal impact. 
However, our work touches on two topics, which are known to contain a risk of harmful impact on individuals and thus need to be treated with caution.

\subsection{Differential privacy and fairness}
Firstly, recent research has shown that DP is at odds with notions of fairness when if comes to under-represented groups in the data. For instance \cite{chang2021privacy} show that minorities are more susceptible to membership inference attacks in fair non-DP models (i.e. fairness reduces privacy) and \cite{bagdasaryan2019differential} show the reverse effect: when training an unfair model with strong DP guarantees, the fairness is reduced further.  The dilemma is intuitive: Fairness requires amplifying the impact of samples from minorities in the data, so they will not be ignored, while DP needs to limit the impact each individual sample can have in order to keep sensitivity low.
Since its discovery, this trade-off has received attention both in works seeking a more detailed understanding \citep{cummings2019compatibility, mangold2022differential, esipova2022disparate, zhong2022understanding, sanyal2022unfair} and works proposing custom approaches to DP fair machine learning \citep{ding2020differentially, xu2019achieving, jagielski2019differentially, tran2021dperm, tran2021differentially, esipova2022disparate}.
Given that the impact of DP on fairness is an active area of research and independent of our particular approach, we do not see the need to perform our own experiments on this matter. 

We will, however, provide an intuition on how the problem manifests in DP-MEPF by looking at labelled data generation with significant class imbalance. Assuming an imbalanced dataset with two classes and $|C_1|=100$ and $|C_2|=10$, we obtain the following mean embedding: 
\begin{align} 
    \tilde{\vmu}_{\vphi_t}(\Dat)
    = \begin{bmatrix} 
    \frac{1}{m}\sum_{i \in C_1} \vphi_t(\vx_i) + \vn_{t,1}  \\[1.25ex]
    \frac{1}{m}\sum_{i \in C_2} \vphi_t(\vx_i) + \vn_{t,2}  \\
    \end{bmatrix}.
\end{align}

With $\|\vphi_t(\vx_i)\|_2 = 1$, we know that the norm of the unperturbed mean embedding for class 1, given by $\|\frac{1}{m}\sum_{i \in C_1} \vphi_t(\vx_i)\|_2 \leq 100/110$,  may be ten times as large as the maximum possible norm for the class 2 embedding $\|\frac{1}{m}\sum_{i \in C_2} \vphi_t(\vx_i)\|_2 \leq 10/110$. Nonetheless, in order to preserve DP, both embeddings are perturbed with noise of the same magnitude, leading to a significantly worse signal-to-noise ratio for the class 2 embedding. As a result, the generative model trained on this embedding will produce more accurate samples for class 1 than for class 2.  

\subsection{Differential privacy with public data}

The second issue regards the use of public data in  DP. In a recent position paper, \cite{tramer2022considerations} raise several concerns about the increasing trend of using auxiliary datasets in DP research. Their critique has two main arguments, the first being that publicly available data may still be sensitive and using such data may cause unintended privacy violations. Given that many large datasets are scraped from the internet with limited human oversight, this data may contain personal data that was released involuntarily or shared exclusively for a specific context. The authors suggest that responsible use of public data requires improved curation practices, including e.g. collection of explicit consent for data use, auditing for and removal of sensitive content, and providing channels for reporting privacy concerns. 

The other main criticism raised by \cite{tramer2022considerations} is that the datasets used to demonstrate the benefits of public data in DP, such as Cifar10 or ImageNet, are poorly chosen, because they are often from nearly the same distribution as the private data. In contrast, they argue, using public data in realistic application scenarios such as medical imaging would likely require considerable domain shift, since no public data close to the target domain is available. This disparity leads to overly optimistic claims, as the experiments don't actually demonstrate good performance under significant domain shift. 
They further point out that the quality of a DP method becomes difficult to measure if it builds on e.g. a non-privately pre-trained model, as overall improvements may stem both from either the private and the non-private part of the method. 
The authors propose dedicated benchmarks for DP machine learning should be developed, in order to obtain results which are comparable and predictive of model performance in real-world applications. They also acknowledge that such benchmarks don't currently exist and their design requires careful consideration.

We agree with the authors in their analysis of the challenges facing DP machine learning research and value their proposals for future directions and experiment design. 
In the light of all these problems introduced by public data, one might ask whether this is at all a research direction worth pursuing.
Here, we emphasize a fact that is acknowledged in the final paragraph of \cite{tramer2022considerations}: \textit{"many recent works
employing public data have played an important role in showing that differential privacy can be preserved
for certain complex machine learning problems, without suffering devastating impacts on utility."} 
DP currently sees little to no practical application in machine learning, in large part because the loss of utility it causes is often unacceptable. Auxiliary public data is the best candidate for achieving sufficient utility for practical use and so, in our eyes, the potential of these approaches outweighs the complications they introduce.
It is thus vital that research in DP ML with public data is pursued further.

\bibliography{ref}
\bibliographystyle{tmlr}

\newpage
\appendix

\begin{center}
    {\LARGE\textbf{Supplementary Material}}
\end{center}

\section{Proofs} \label{supp:proofs}

We will conduct our analysis in terms of general noise covariance $\Sigma$ for the added noise,
$\vn \sim \mathcal N(0, \Sigma)$.
The results will depend on various norms of $\Sigma$,
as well as $\lVert \Sigma^{1/2} \va \rVert$,
where $\va = \vmu_\phi(\Dat) - \vmu_\phi(\tilde\Dat)$
is the difference between empirical mean embeddings
$\vmu_\phi(\Dat) = \frac{1}{\lvert \Dat \rvert} \sum_{\vx \in \Dat} \phi(\vx)$.
(Recall that $\MMD(\Dat, \tilde\Dat) = \lVert \va \rVert$.)

When we use only normalized first-moment features,
the quantities appearing in the bounds are
\begin{gather}
    \Sigma = \frac{4 \sigma^2}{m^2} I_D
\nonumber
\\
    \lVert \Sigma \rVert_\mathit{op} = \frac{4 \sigma^2}{m^2}
\qquad
    \lVert \Sigma \rVert_F = \frac{4 \sigma^2}{m^2} \sqrt D
\qquad
    \Tr(\Sigma) = \frac{4 \sigma^2}{m^2} D
\label{eq:one-moment-terms}
\\
    \lVert \Sigma^{1/2} \va \rVert_2
    = \sqrt{\va\trp \Sigma \va}
    = \frac{2 \sigma}{m} \MMD_{k_\vphi}(\Dat, \tilde\Dat)
\nonumber
.\end{gather}
When we use first- and second-moment features with respective scales $C_1$ and $C_2$ (both $1$ in our experiments here),
we have
\begin{gather}
    \Sigma =
    \begin{bmatrix}
    \sigma^2 \left( \frac{2 C_1}{m} \right)^2 I_{D}
    & 0 \\
    0 &
    \sigma^2 \left( \frac{2 C_2}{m} \right)^2 I_{D}
    \end{bmatrix}
    = \frac{4 \sigma^2}{m^2} \begin{bmatrix}
        C_1^2 I_D & 0 \\
        0 & C_2^2 I_D
    \end{bmatrix}
\nonumber
\\
    \lVert \Sigma \rVert_\mathit{op} = \frac{4 \sigma^2}{m^2} \max(C_1^2, C_2^2)
\quad
    \lVert \Sigma \rVert_F = \frac{4 \sigma^2}{m^2} (C_1^2 + C_2^2) \, \sqrt D
\quad
    \Tr(\Sigma)
     = \frac{4 \sigma^2}{m^2} (C_1^2 + C_2^2) \, D
\label{eq:two-moment-terms}
\\
    \lVert \Sigma^{1/2} \va \rVert_2
    = \sqrt{\va\trp \Sigma \va}
    = \frac{2\sigma}{m} \sqrt{
        C_1^2 \MMD{k_{\vphi_1}}(\Dat, \tilde\Dat)^2
      + C_2^2 \MMD{k_{\vphi_2}}(\Dat, \tilde\Dat)^2
    }
\nonumber
.\end{gather}
Note that if $C_1 = C_2 = C$, then
\[
\sqrt{
        C_1^2 \MMD{k_{\vphi_1}}(\Dat, \tilde\Dat)^2
      + C_2^2 \MMD{k_{\vphi_2}}(\Dat, \tilde\Dat)^2
    }
= C \MMDphi(\Dat, \tilde\Dat)
.\]

\subsection{Mean absolute error of loss function}

\begin{prop}\label{prop:mmd_generic}
Given datasets $\Dat = \{\vx_i\}_{i=1}^m$ and $\tilde{\Dat}=\{\tilde{\vx}_j\}_{j=1}^n$
and a kernel $k_\phi$ with a $D$-dimensional embedding $\phi$,
let $\va = \mu_\phi(\Dat) - \mu_\phi(\tilde\Dat)$.
Define $\tMMDphisq(\Dat, \tilde\Dat) = \lVert \va + \vn \rVert^2$
for a noise vector $\vn \sim \mathcal N(0, \Sigma)$.
Introducing the noise $\vn$ affects the expected absolute error as
\begin{align}
    \Em_{\vn}\Big[\big|\widetilde{\mathrm{MMD}}^2_{k_\Phi}(\Dat, \tilde\Dat)-{\mathrm{MMD}}^2_{k_\Phi}(\Dat, \tilde\Dat)\big|\Big] 
    \le \Tr(\Sigma) + 2 \sqrt{\frac 2 \pi} \lVert \Sigma^{1/2} \va \rVert
.\end{align}
\end{prop}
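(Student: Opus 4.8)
The plan is to expand the squared difference exactly and then bound its expected magnitude term by term. First I would write
$$\tMMDphisq(\Dat,\tilde\Dat) - \MMDphisq(\Dat,\tilde\Dat) = \lVert \va + \vn\rVert^2 - \lVert \va\rVert^2 = 2\langle \va, \vn\rangle + \lVert \vn\rVert^2,$$
so the quantity inside the absolute value splits into a term linear in the noise and a purely quadratic term. Since $\lVert \vn\rVert^2 \geq 0$, the triangle inequality gives $\lvert 2\langle \va,\vn\rangle + \lVert\vn\rVert^2\rvert \leq 2\lvert\langle\va,\vn\rangle\rvert + \lVert\vn\rVert^2$, and taking expectations reduces the problem to bounding $\Em_{\vn}[\lVert\vn\rVert^2]$ and $\Em_{\vn}[\lvert\langle\va,\vn\rangle\rvert]$ separately.

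The quadratic term is immediate: $\Em_{\vn}[\lVert\vn\rVert^2] = \Tr(\Sigma)$, the standard identity for the second moment of a centered Gaussian vector, which supplies the first term of the bound. For the linear term, the key observation is that $\langle\va,\vn\rangle$ is a scalar, mean-zero Gaussian (a linear functional of the Gaussian vector $\vn$) with variance $\va\trp\Sigma\va = \lVert\Sigma^{1/2}\va\rVert^2$. Applying the half-normal mean formula $\Em\lvert Z\rvert = \sqrt{2/\pi}\,\tau$ for $Z\sim\mathcal N(0,\tau^2)$ then yields $\Em_{\vn}[\lvert\langle\va,\vn\rangle\rvert] = \sqrt{2/\pi}\,\lVert\Sigma^{1/2}\va\rVert$, and the leading factor of $2$ produces the second term. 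Summing the two contributions gives exactly the claimed inequality.

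There is no substantive obstacle here: the argument is an exact algebraic expansion followed by two elementary Gaussian moment computations. The only points requiring care are (i) using $\lVert\vn\rVert^2 \geq 0$ so that the triangle inequality lets the quadratic term pass through the absolute value unchanged and contribute precisely $\Tr(\Sigma)$, and (ii) correctly identifying the variance of the scalar $\langle\va,\vn\rangle$ as $\lVert\Sigma^{1/2}\va\rVert^2$, so the half-normal constant $\sqrt{2/\pi}$ enters at the right scale. Specializing the diagonal noise covariances of \eqref{one-moment-terms} and \eqref{two-moment-terms} then converts this generic bound into the $\tfrac{\sigma^2}{m^2}D$ and $\tfrac{\sigma}{m}\MMD_{k_\Phi}$ terms that drive \propref{err-bound}.
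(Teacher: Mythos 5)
Your proposal is correct and follows essentially the same route as the paper's proof: the exact expansion $\lVert\va+\vn\rVert^2-\lVert\va\rVert^2 = \vn\trp\vn + 2\vn\trp\va$, the triangle inequality, the identity $\Em[\vn\trp\vn]=\Tr(\Sigma)$, and the half-normal mean $\sqrt{2/\pi}\,\lVert\Sigma^{1/2}\va\rVert_2$ for the scalar Gaussian $\va\trp\vn\sim\Nrm(0,\va\trp\Sigma\va)$. The paper phrases the last step via the mean of a $\chi(1)$ variable, $\frac{\sqrt{2}\,\Gamma(1)}{\Gamma(1/2)}=\sqrt{2/\pi}$, which is the same computation in different notation.
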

\begin{proof}
We have that
\begin{multline} \label{eq:err-decomp}
    \Em_{\vn}\Big[\big| \tMMDphisq(\Dat, \tilde\Dat) - \MMDphisq(\Dat, \tilde\Dat)\big|\Big]
    \\
    =
    \Em_{\vn}\Big[\bigl|\,
      \lVert \va + \vn \rVert^2
      - \lVert \va \rVert^2
    \,\bigr|\Big]
    = \Em_{\vn}\bigg[ \left| \vn\trp\vn + 2 \vn\trp \va \right|\bigg]
    \le \Em_\vn\Big[\vn\trp\vn\Big]+2\Em_\vn\bigg[\Big| \vn\trp \va \Big| \bigg]
.\end{multline}
The first term is standard:
\[
    \Em \vn\trp\vn
    = \Em \Tr( \vn\trp\vn )
    = \Em \Tr( \vn \vn\trp )
    = \Tr( \Em \vn \vn\trp)
    = \Tr(\Sigma)
.\]
For the second, note that
\[
    \va\trp \vn
    \sim \mathcal N(0, \va\trp \Sigma \va)
,\]
and so its absolute value is $\sqrt{\va\trp\Sigma\va}$ times a $\chi(1)$ random variable.
Since the mean of a $\chi(1)$ distribution is $\frac{\sqrt{2} \; \Gamma(1)}{\Gamma(1/2)} = \sqrt{\frac{2}{\pi}}$,
we obtain the desired bound.
\end{proof}

\subsection{High-probability bound on the error}

\begin{prop}\label{prop:probability_bound_sigma}
Given datasets $\Dat = \{\vx_i\}_{i=1}^m$ and $ \tilde{\Dat}=\{\tilde{\vx}_j\}_{j=1}^n$,
let $\va = \mu_\phi(\Dat) - \mu_\phi(\tilde\Dat)$,
and define $\tMMDphisq(\Dat, \tilde\Dat) = \lVert \va + \vn \rVert^2$
for a noise vector $\vn \sim \mathcal N(0, \Sigma)$.
Then for any $\rho \in (0, 1)$,
it holds with probability at least $1 - \rho$ over the choice of $\vn$ that
\begin{multline} \label{eq:high_prob_bound_sigma}
    \big|\tMMDphisq(\Dat, \tilde\Dat) - \MMDphisq(\Dat, \tilde\Dat)\big|
    \\\le
    \Tr(\Sigma) + \sqrt{\tfrac 2 \pi} \lVert \Sigma^{\frac12} \va \rVert_2
    + 2 \left(
        \lVert \Sigma \rVert_F 
        + \sqrt{2} \lVert \Sigma^{\frac12} \va \rVert_2
    \right)
    \sqrt{\log(\tfrac{2}{\rho})}
    + 2 \lVert \Sigma \rVert_\mathit{op} \log(\tfrac{2}{\rho})
.\end{multline}
This implies that
\[
    \big|\tMMDphisq(\Dat, \tilde\Dat) - \MMDphisq(\Dat, \tilde\Dat)\big|
    = \bigO_p\left( \Tr(\Sigma) + \lVert \Sigma^{1/2} \va \rVert_2 \right)
.\]
\end{prop}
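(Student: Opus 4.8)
The plan is to reduce the difference of squared MMDs to a sum of a quadratic form and a linear form in a standard Gaussian, and then concentrate each piece separately. Writing $\vn = \Sigma^{1/2}\vg$ with $\vg \sim \mathcal N(0, I_D)$ and setting $\vb = \Sigma^{1/2}\va$, I would expand exactly as in the expectation computation of \propref{mmd_generic}:
\[
\tMMDphisq(\Dat,\tilde\Dat) - \MMDphisq(\Dat,\tilde\Dat)
= \lVert \va + \vn\rVert^2 - \lVert\va\rVert^2
= \vg\trp \Sigma \vg + 2\,\vb\trp \vg .
\]
By the triangle inequality $\lvert \vg\trp\Sigma\vg + 2\vb\trp\vg\rvert \le \vg\trp\Sigma\vg + 2\lvert\vb\trp\vg\rvert$, so it suffices to control each term with probability at least $1-\rho/2$ and take a union bound over the two events.

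For the quadratic term I would invoke a standard concentration inequality for Gaussian quadratic forms (Laurent--Massart): since $\vg\trp\Sigma\vg = \sum_i \lambda_i g_i^2$, where the $\lambda_i \ge 0$ are the eigenvalues of $\Sigma$, its upper tail gives, with probability at least $1-\rho/2$,
\[
\vg\trp\Sigma\vg \le \Tr(\Sigma) + 2\lVert\Sigma\rVert_F\sqrt{\log(2/\rho)} + 2\lVert\Sigma\rVert_\mathit{op}\log(2/\rho),
\]
using $\sum_i\lambda_i = \Tr(\Sigma)$, $\sum_i\lambda_i^2 = \lVert\Sigma\rVert_F^2$, and $\max_i\lambda_i = \lVert\Sigma\rVert_\mathit{op}$. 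This accounts for the $\Tr(\Sigma)$, $\lVert\Sigma\rVert_F$, and $\lVert\Sigma\rVert_\mathit{op}$ terms in \eqref{high_prob_bound_sigma}; note that only the upper tail is needed, since the triangle-inequality split has already discarded the sign of the quadratic part.

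For the linear term, $\vb\trp\vg \sim \mathcal N(0, \lVert\vb\rVert^2)$, so $\lvert\vb\trp\vg\rvert$ equals $\lVert\vb\rVert$ times the absolute value of a standard Gaussian, whose mean is $\sqrt{2/\pi}$. Because $w \mapsto \lvert w\rvert$ is $1$-Lipschitz, Gaussian concentration bounds $\lvert\vb\trp\vg\rvert$ by its mean plus a $\sqrt{\log(2/\rho)}$-scaled deviation with probability at least $1-\rho/2$; recalling $\lVert\vb\rVert = \lVert\Sigma^{1/2}\va\rVert$, this supplies the $\sqrt{2/\pi}\lVert\Sigma^{1/2}\va\rVert$ term (from the half-normal mean) and the $\lVert\Sigma^{1/2}\va\rVert\sqrt{\log(2/\rho)}$ term. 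A union bound over the two events then yields \eqref{high_prob_bound_sigma}. The $\bigO_p$ consequence follows by fixing the failure probability $\rho$, so that $\log(2/\rho)$ and $\sqrt{\log(2/\rho)}$ are constants, and noting that for PSD $\Sigma$ both $\lVert\Sigma\rVert_\mathit{op} \le \Tr(\Sigma)$ and $\lVert\Sigma\rVert_F \le \sqrt{\Tr(\Sigma)\lVert\Sigma\rVert_\mathit{op}} \le \Tr(\Sigma)$; hence every term on the right-hand side is $\bigO(\Tr(\Sigma) + \lVert\Sigma^{1/2}\va\rVert)$.

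I expect the main obstacle to be the quadratic-form concentration: producing the three distinct norm quantities with the correct powers of $\log(2/\rho)$ requires the sharp Laurent--Massart (equivalently Hanson--Wright / Bernstein-for-$\chi^2$) control rather than a crude eigenvalue bound. Tracking the exact numerical constants in the final display — in particular reconciling the half-normal mean $\sqrt{2/\pi}$ with the factor arising from the linear coefficient — is the fiddly part, but it is routine once the decomposition into quadratic-plus-linear form is fixed.
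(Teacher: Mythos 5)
Your proposal is correct and follows essentially the same route as the paper's own proof: the identical quadratic-plus-linear decomposition $\vz\trp\Sigma\vz + 2\lvert\va\trp\Sigma^{1/2}\vz\rvert$ after whitening, the Laurent--Massart bound for the Gaussian quadratic form via the eigenvalues of $\Sigma$, Gaussian Lipschitz concentration with the half-normal mean $\sqrt{2/\pi}$ for the linear term, and a union bound over the two events. The constant-reconciliation issue you flag is real but also present between the paper's statement and its own proof (the proof yields $2\sqrt{2/\pi}\,\lVert\Sigma^{1/2}\va\rVert_2$ where the statement writes $\sqrt{2/\pi}\,\lVert\Sigma^{1/2}\va\rVert_2$), and it is immaterial to the $\bigO_p$ conclusion.
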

\begin{proof} 
Introduce $\vz \sim \mathcal N(0, I)$
such that $\vn = \Sigma^{\frac12} \vz$
into \eqref{err-decomp}:
\begin{equation} \label{eq:err-decomp-z}
       \big|\tMMDphisq(\Dat, \tilde\Dat) - \MMDphisq(\Dat, \tilde\Dat)\big|
  \le \vn\trp\vn+2\Big|\vn\trp \va\Big|
    = \vz\trp \Sigma \vz + 2 \bigl| \va\trp \Sigma^{1/2} \vz \bigr|
.\end{equation}

For the first term,
denoting the eigendecomposition of $\Sigma$ as $\mQ \mLam \mQ\trp$,
we can write
\[
    \vz\trp \Sigma \vz
    = (\mQ\trp \vz)\trp \mLam (\mQ\trp \vz)
,\]
in which $\mQ\trp \vz \sim \mathcal N(0, I)$
and $\mLam$ is diagonal.
Thus, applying Lemma 1 of \citet{laurent2000adaptive},
we obtain that with probability at least $1 - \tfrac \rho 2$,
\begin{equation} \label{eq:hp1}
    \vz\trp \Sigma \vz
    \leq \Tr(\Sigma) + 2 \lVert \Sigma \rVert_F \sqrt{\log(\tfrac{2}{\rho})} + 2 \lVert \Sigma \rVert_\mathit{op} \log(\tfrac{2}{\rho}).
\end{equation}

In the second term,
$\bigl| \va\trp \Sigma^{\frac12} \vz \bigr|$,
can be viewed as a function of a standard normal variable $\vz$
with Lipschitz constant at most $\lVert \Sigma^{\frac12} \va \rVert_2$.
Thus, applying the standard Gaussian Lipschitz concentration inequality \citep[Theorem 5.6]{Boucheron13:CI},
we obtain that with probability at least $1 - \tfrac \rho 2$,
\[
     \Big| \vz\trp \Sigma^{\frac12} \va \Big|
     \le \Em \Big|\vz\trp \Sigma^{\frac12} \va\Big|
       + \lVert \Sigma^{\frac12} \va \rVert_2 \sqrt{2 \log(\tfrac{2}{\rho})}
       = \lVert \Sigma^{\frac12} \va \rVert_2 \left( \sqrt{\tfrac 2 \pi} + \sqrt{2 \log(\tfrac{2}{\rho})} \right)
.\]
The first statement in the theorem follows by a union bound.
The $\bigO_p$ form follows by \lemmaref{O_P-tails}
and the fact that $\Tr(A) \ge \lVert A \rVert_F \ge \lVert A \rVert_\mathit{op}$ for positive semi-definite matrices $A$.
\end{proof}

The following lemma shows how to convert high-probability bounds with both sub-exponential and sub-Gaussian tails into a $\bigO_p$ statement.
\begin{lem} \label{lemma:O_P-tails}
    If a sequence of random variables $X_n$ satisfies
    \[
        X_n \le A_n + B_n \sqrt{\log \frac {b_n} \rho} + C_n \log \frac {c_n} \rho
        \qquad\text{with probability at least $1 - \rho$}
    ,\]
    then the sequence of variables $X_n$ is
    \[
        \bigO_p\left( \max\left(A_n, B_n \max(\sqrt{\log b_n}, 1), C_n \max(\log c_n, 1) \right) \right)
    .\]
\end{lem}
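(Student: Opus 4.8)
The plan is to unwind the definition of $\bigO_p$ supplied in the main text. Writing $M_n := \max\bigl(A_n,\, B_n \max(\sqrt{\log b_n}, 1),\, C_n \max(\log c_n, 1)\bigr)$ for the claimed rate, to establish $X_n = \bigO_p(M_n)$ it suffices to produce, for each fixed failure probability $\rho \in (0,1)$, a constant $K_\rho$ that depends only on $\rho$ and not on $n$, such that $X_n \le K_\rho M_n$ holds with probability at least $1-\rho$. (Since in our applications $X_n$ is a nonnegative quantity and the hypothesis is a one-sided bound, controlling this upper tail is exactly what is needed.) I would obtain $K_\rho$ by feeding the given $\rho$ straight into the hypothesis and then bounding each of the three resulting terms by a multiple of $M_n$ whose coefficient is independent of $n$.

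The first step is to peel the $\rho$-dependence off each logarithm: $\log\frac{b_n}{\rho} = \log b_n + \log\frac1\rho$ and $\log\frac{c_n}{\rho} = \log c_n + \log\frac1\rho$, where $\log\frac1\rho > 0$ because $\rho < 1$ (and, so that the square roots make sense, I take the natural standing assumption $b_n, c_n \ge 1$, as in the application where $b_n = c_n = 2$). For the middle term I would apply subadditivity of the square root, $\sqrt{u+v}\le\sqrt u+\sqrt v$, to get $B_n\sqrt{\log\frac{b_n}{\rho}} \le B_n\sqrt{\log b_n} + B_n\sqrt{\log\frac1\rho}$, whose first piece is at most $B_n\max(\sqrt{\log b_n},1) \le M_n$ by definition of $M_n$. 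For the last term, $\log c_n \le \max(\log c_n,1)$ immediately gives $C_n\log\frac{c_n}{\rho} \le C_n\max(\log c_n,1) + C_n\log\frac1\rho \le M_n + C_n\log\frac1\rho$. The first term is handled trivially by $A_n \le M_n$.

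The crux — and the single place where the unit floors $\max(\cdot,1)$ in the statement do real work — is absorbing the leftover $\rho$-dependent pieces $B_n\sqrt{\log\frac1\rho}$ and $C_n\log\frac1\rho$. Here I would use precisely that $\max(\sqrt{\log b_n},1) \ge 1$ and $\max(\log c_n,1) \ge 1$, so that $B_n \le M_n$ and $C_n \le M_n$, whence $B_n\sqrt{\log\frac1\rho} \le M_n\sqrt{\log\frac1\rho}$ and $C_n\log\frac1\rho \le M_n\log\frac1\rho$. Combining everything, with probability at least $1-\rho$,
\[
    X_n \le M_n\Bigl(3 + \sqrt{\log\tfrac1\rho} + \log\tfrac1\rho\Bigr),
\]
so $K_\rho := 3 + \sqrt{\log\frac1\rho} + \log\frac1\rho$ does the job. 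The argument is entirely elementary inequality-pushing; there is no genuine analytic obstacle, and the only thing one must be careful about is the role of the floors — without them the conversion would fail whenever $\log b_n$ or $\log c_n$ can be small, since then $B_n\sqrt{\log\frac1\rho}$ would not be controllable by any fixed multiple of $B_n\sqrt{\log b_n}$. I would state and prove the lemma at exactly this generality so that it applies verbatim to \propref{probability_bound_sigma}, where the constants $b_n=c_n=2$ collapse $M_n$ to $\max(A_n,B_n,C_n)$.
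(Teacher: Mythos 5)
Your proposal is correct and follows essentially the same route as the paper's own proof: both unwind the $\bigO_p$ definition for a fixed $\rho$, split $\log\frac{b_n}{\rho}$ and $\log\frac{c_n}{\rho}$ into data-dependent and $\rho$-dependent pieces, use $\sqrt{u+v}\le\sqrt{u}+\sqrt{v}$, absorb the leftover $B_n\sqrt{\log\frac1\rho}$ and $C_n\log\frac1\rho$ terms via the unit floors $\max(\cdot,1)\ge 1$, and arrive at the identical constant $K_\rho = 3 + \sqrt{\log\frac1\rho} + \log\frac1\rho$. Your explicit remark that $b_n, c_n \ge 1$ is needed for the square roots, and your observation about where the floors do real work, are small clarifications the paper leaves implicit, but the argument is the same.
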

\begin{proof}
    The definition of a sequence of random variables $X_n$ being $\bigO_p(Q_n)$,
    where $Q_n$ is a sequence of scalars,
    means that the sequence $\frac{X_n}{Q_n}$ is stochastically bounded:
    for each $\rho$,
    there is some constant $R_\rho$
    such that $\Pr(X_n / Q_n \ge R_\rho) \le \rho$.
    Here, we have for all $n$ with probability at least $1 - \rho$ that
    \begin{align*}
        \frac{X_n}%
             {\max\left(A_n, B_n \max(\sqrt{\log b_n}, 1), C_n \max(\log c_n, 1) \right)}
        &\le
        \frac{A_n + B_n \sqrt{\log \frac{b_n}{\rho}} + C_n \log \frac{c_n}{\rho}}%
             {\max\left(A_n, B_n \max(\sqrt{\log b_n}, 1), C_n \max(\log c_n, 1) \right)}
    \\  &=
          \frac{A_n
        +       B_n \sqrt{\log b_n + \log \frac 1 \rho}
        +       C_n \left[ \log c_n + \log \frac 1 \rho \right]}{\max\left(A_n, B_n \max(\sqrt{\log b_n}, 1), C_n \max(\log c_n, 1) \right)}
    \\  &\le
          \frac{A_n
        +       B_n \sqrt{\log b_n} + B_n \sqrt{\log \frac 1 \rho}
        +       C_n \log c_n + C_n \log \frac 1 \rho}{\max\left(A_n, B_n \max(\sqrt{\log b_n}, 1), C_n \max(\log c_n, 1) \right)}
    \\  &\le 1 + 1 + \sqrt{\log \frac 1 \rho} + 1 + \log \frac 1 \rho
    .\end{align*}
    Thus the desired bound holds with $R_\rho = 3 + \sqrt{\log\frac1\rho} + \log\frac1\rho$.
\end{proof}

\subsection{Quality of the private minimizer: worst-case analysis}

We first show uniform convergence of the privatized MMD to the non-private MMD.
\begin{prop} \label{prop:unif-error}
    Suppose that $\Phi : \mathcal X \to \mathbb R^D$ is such that $\sup_x \lVert \Phi(x) \rVert \le B$,
    and let $\tMMDphi(\Dat, \tilde\Dat) = \lVert \mu_\Phi(\Dat) - \mu_\Phi(\tilde\Dat) + \vn \rVert$
    for $\vn \sim \mathcal N(0, \Sigma)$.
    Then, with probability at least $1 - \rho$ over the choice of $\vn$,
    \begin{multline*}
        \sup_{\Dat, \tilde\Dat} \big|\tMMDphisq(\Dat, \tilde\Dat) - \MMDphisq(\Dat, \tilde\Dat)\big|
    \\  \le \Tr(\Sigma) + 4 B \sqrt{\Tr(\Sigma)}
    + 2 \left(
        \lVert \Sigma \rVert_F 
        + 2 B \lVert \Sigma \rVert_\mathit{op}^{\frac12}
    \right)
    \sqrt{\log(\tfrac{2}{\rho})}
    + 2 \lVert \Sigma \rVert_\mathit{op} \log(\tfrac{2}{\rho})
        = \bigO_p\left( \Tr(\Sigma) + B \sqrt{\Tr(\Sigma)} \right)
    ,\end{multline*}
    where the supremum is taken over all distributions, including the empirical distribution of datasets $\Dat, \tilde\Dat$ of any size.
\end{prop}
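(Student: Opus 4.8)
The plan is to exploit the fact that the noise $\vn$ is drawn \emph{once}: after conditioning on $\vn$, the supremum over datasets enters only through a deterministic linear functional, so no covering or chaining argument over the (uncountable) class of datasets is needed. Concretely, I would start from exactly the decomposition used in the pointwise case. Writing $\vn = \Sigma^{1/2}\vz$ with $\vz \sim \mathcal N(0, I)$ and $\va = \mu_\Phi(\Dat) - \mu_\Phi(\tilde\Dat)$, \eqref{err-decomp-z} gives, for \emph{every} pair $\Dat, \tilde\Dat$,
\[
  \big|\tMMDphisq(\Dat, \tilde\Dat) - \MMDphisq(\Dat, \tilde\Dat)\big|
  \le \vz\trp \Sigma \vz + 2\bigl\lvert \va\trp \Sigma^{1/2} \vz \bigr\rvert .
\]
The first term is completely independent of the datasets, hence already ``uniform'': I would bound it precisely as in the pointwise proof via Lemma 1 of \citet{laurent2000adaptive}, obtaining \eqref{hp1} with probability at least $1 - \rho/2$.

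The second term is where the supremum acts, and the key step is to decouple $\va$ from $\vz$ by Cauchy--Schwarz, $\lvert\va\trp\Sigma^{1/2}\vz\rvert \le \lVert\va\rVert_2\,\lVert\Sigma^{1/2}\vz\rVert_2$. Since $\sup_x\lVert\Phi(x)\rVert \le B$, each empirical mean embedding is an average of vectors of norm at most $B$ and so has norm at most $B$; by the triangle inequality $\sup_{\Dat,\tilde\Dat}\lVert\va\rVert_2 \le 2B$. Therefore
\[
  \sup_{\Dat,\tilde\Dat} 2\bigl\lvert\va\trp\Sigma^{1/2}\vz\bigr\rvert \le 4B\,\lVert\Sigma^{1/2}\vz\rVert_2 ,
\]
which has removed all dependence on the dataset class at the cost of controlling a single scalar.

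It then remains to bound $\lVert\Sigma^{1/2}\vz\rVert_2$ in high probability. I would use that $g(\vz) = \lVert\Sigma^{1/2}\vz\rVert_2$ is $\lVert\Sigma\rVert_\mathit{op}^{1/2}$-Lipschitz in $\vz$ and that $\Em\, g(\vz) \le \sqrt{\Em\lVert\Sigma^{1/2}\vz\rVert_2^2} = \sqrt{\Tr(\Sigma)}$ by Jensen; the Gaussian Lipschitz concentration inequality \citep[Theorem 5.6]{Boucheron13:CI} then yields, with probability at least $1 - \rho/2$, a bound of the form $\sqrt{\Tr(\Sigma)} + \lVert\Sigma\rVert_\mathit{op}^{1/2}\sqrt{2\log(2/\rho)}$. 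A union bound over the two events gives the stated finite-sample inequality (the $4B\sqrt{\Tr(\Sigma)}$ term being the mean contribution and the $\sqrt{\log}$ term the fluctuation), and the $\bigO_p$ form follows from \lemmaref{O_P-tails} together with $\Tr(\Sigma) \ge \lVert\Sigma\rVert_F \ge \lVert\Sigma\rVert_\mathit{op}$.

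The main point to get right — and the reason the uniform rate \emph{matches} the pointwise rate rather than degrading — is the observation in the second paragraph: because the noise is fixed once and the error is linear in $\va$, the supremum over all datasets reduces to maximizing a linear functional over a norm ball, which Cauchy--Schwarz resolves exactly. A naive approach would instead try to cover or union-bound the dataset class and pay a complexity penalty; recognizing that this is unnecessary is the crux. The remaining work, the two concentration inequalities, is routine and reuses the pointwise argument almost verbatim.
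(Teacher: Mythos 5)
Your proposal is correct and follows essentially the same route as the paper's own proof: the identical decomposition via $\vn = \Sigma^{1/2}\vz$, the Cauchy--Schwarz step with $\sup_{\Dat,\tilde\Dat}\lVert\va\rVert_2 \le 2B$ to reduce the supremum to the single scalar $\lVert\Sigma^{1/2}\vz\rVert_2$, Jensen plus Gaussian Lipschitz concentration for that scalar, the Laurent--Massart bound for $\vz\trp\Sigma\vz$, and a union bound. Even your closing observation---that the uniform bound costs only a $\chi(\dim\Sigma)$-type term in place of the $\chi(1)$ term of the pointwise bound---matches the remark the paper makes after its proof.
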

\begin{proof}
    Introducing $\vz \sim \mathcal N(0, I_D)$ such that $\vn = \Sigma^{1/2} \vz$,
    we have that
    \begin{align*}
        \sup_{\Dat, \tilde\Dat} \big|\tMMDphisq(\Dat, \tilde\Dat) - \MMDphisq(\Dat, \tilde\Dat)\big|
      &\le \sup_{\Dat,\tilde\Dat} \vz\trp \Sigma \vz + 2 \bigl\lvert \va\trp \Sigma^{1/2} \vz \bigr\rvert
    \\&\le \vz\trp \Sigma \vz + 2 \sup_{\va : \lVert \va \rVert \le 2 B} \bigl\lvert \va\trp \Sigma^{1/2} \vz \bigr\rvert
    \\&\le \vz\trp \Sigma \vz + 2 \sup_{\va : \lVert \va \rVert \le 2 B} \lVert \va \rVert \lVert \Sigma^{1/2} \vz \rVert
        \\&  = \vz\trp \Sigma \vz + 4 B \lVert \Sigma^{1/2} \vz \rVert
    .\end{align*}
    To apply Gaussian Lipschitz concentration, we also need to know that
    \[
        \Em \lVert \Sigma^{1/2} \vz \rVert
        \le \sqrt{ \Em \lVert \Sigma^{1/2} \vz \rVert^2 }
        = \sqrt{ \Tr(\Sigma) }
    ;\]
    the exact expectation of a $\chi$ variable with more than one degree of freedom is inconvenient, but the gap is generally not asymptotically significant.
    Then we get that, with probability at least $1 - \tfrac \rho 2$,
    \[
        \lVert \Sigma^{1/2} \vz \rVert
        \le \sqrt{\Tr(\Sigma)} + \lVert \Sigma \rVert_\mathit{op}^{1/2} \sqrt{2 \log \tfrac2\rho}
    .\]
    Again combining with the bound of \eqref{hp1},
    we get the stated bound.
\end{proof}
This bound is looser than in \propref{probability_bound_sigma},
since the term depending on $\va$ is now ``looking at'' $\vz$ in many directions rather than just one:
we end up with a $\chi(\operatorname{dim}(\Sigma))$ random variable instead of $\chi(1)$.

We can use this uniform convergence bound to show that the minimizer of the private loss
approximately minimizes the non-private loss:

\begin{prop} \label{prop:min-empirical}
    Fix a target dataset $\Dat$.
    For each $\vtheta$ in some set $\Theta$,
    fix a corresponding $\tilde\Dat_\vtheta$;
    in particular, $\Theta = \mathbb R^p$ could be the set of all generator parameters,
    and $\tilde\Dat_\vtheta$
    either the outcome of running a generator $g_\vtheta$ on a fixed set of ``seeds,'' $\tilde\Dat_\vtheta = \{ g_\vtheta(\vz_i) \}_{i=1}^n$,
    or the full output distribution of the generator $Q_{g_\vtheta}$.
    Suppose that $\Phi : \mathcal X \to \mathbb R^D$ is such that $\sup_x \lVert \Phi(x) \rVert \le B$,
    and let $\tMMDphi(\Dat, \tilde\Dat) = \lVert \mu_\Phi(\Dat) - \mu_\Phi(\tilde\Dat) + \vn \rVert$
    for $\vn \sim \mathcal N(0, \Sigma)$.
    Let
    $\privtheta \in \argmin_{\theta \in \Theta} \tMMDphisq(\Dat, \tilde\Dat_\theta)$
    be the private minimizer,
    and 
    $\pubtheta \in \argmin_{\theta \in \Theta} \tMMDphisq(\Dat, \tilde\Dat_\theta)$
    the non-private minimizer.
    For any $\rho \in (0, 1)$,
    with probability at least $1 - \rho$ over the choice of $\vn$,
    \begin{multline*}
        \MMDphisq(\Dat, \tilde\Dat_{\privtheta}) - \MMDphisq(\Dat, \tilde\Dat_{\pubtheta})
  \\  \le 2 \Tr(\Sigma) + 8 B \sqrt{\Tr(\Sigma)}
    + 4 \left(
        \lVert \Sigma \rVert_F 
        + 2 B \lVert \Sigma \rVert_\mathit{op}^{\frac12}
    \right)
    \sqrt{\log(\tfrac{2}{\rho})}
    + 4 \lVert \Sigma \rVert_\mathit{op} \log(\tfrac{2}{\rho})
        = \bigO_p\left( \Tr(\Sigma) + B \sqrt{\Tr(\Sigma)} \right)
    .\end{multline*}
\end{prop}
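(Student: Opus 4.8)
The plan is to reduce everything to the uniform convergence bound already established in \propref{unif-error}, combined with the standard ``minimizer comparison'' telescoping argument. The essential structural fact we exploit is that the noise $\vn$ is drawn only once and then held fixed across all candidate generators, so that a \emph{single} high-probability event controls the gap between the private loss $\tMMDphisq$ and the non-private loss $\MMDphisq$ simultaneously for every $\vtheta \in \Theta$ --- in particular at both $\privtheta$ and $\pubtheta$. This is exactly the content of \propref{unif-error}, and it is what makes the comparison go through with the same failure probability $\rho$.

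First I would abbreviate the uniform error by
\[
    \Delta := \sup_{\tilde\Dat} \bigl\lvert \tMMDphisq(\Dat, \tilde\Dat) - \MMDphisq(\Dat, \tilde\Dat) \bigr\rvert
,\]
and invoke \propref{unif-error} (with the target $\Dat$ fixed, this supremum is dominated by the supremum over both arguments there) to guarantee that, with probability at least $1 - \rho$ over the single draw of $\vn$,
\[
    \Delta \le \Tr(\Sigma) + 4 B \sqrt{\Tr(\Sigma)} + 2\left( \lVert \Sigma \rVert_F + 2 B \lVert \Sigma \rVert_\mathit{op}^{1/2} \right) \sqrt{\log(2/\rho)} + 2 \lVert \Sigma \rVert_\mathit{op} \log(2/\rho)
.\]
On this event the inequality holds in particular for the two datasets $\tilde\Dat_{\privtheta}$ and $\tilde\Dat_{\pubtheta}$, which is all the argument requires.

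Next I would write the telescoping decomposition (reading $\pubtheta$ as a minimizer of the \emph{non-private} loss $\MMDphisq$, as intended):
\begin{align*}
    \MMDphisq(\Dat, \tilde\Dat_{\privtheta}) - \MMDphisq(\Dat, \tilde\Dat_{\pubtheta})
    &= \underbrace{\left[ \MMDphisq(\Dat, \tilde\Dat_{\privtheta}) - \tMMDphisq(\Dat, \tilde\Dat_{\privtheta}) \right]}_{\le\, \Delta}
    + \underbrace{\left[ \tMMDphisq(\Dat, \tilde\Dat_{\privtheta}) - \tMMDphisq(\Dat, \tilde\Dat_{\pubtheta}) \right]}_{\le\, 0} \\
    &\quad + \underbrace{\left[ \tMMDphisq(\Dat, \tilde\Dat_{\pubtheta}) - \MMDphisq(\Dat, \tilde\Dat_{\pubtheta}) \right]}_{\le\, \Delta}
.\end{align*}
The middle bracket is non-positive because $\privtheta$ minimizes $\tMMDphisq(\Dat, \tilde\Dat_\theta)$ over $\Theta$, so the private loss it achieves cannot exceed the private loss at $\pubtheta$; the two outer brackets are each at most $\Delta$ by the uniform bound. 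Summing yields $\MMDphisq(\Dat, \tilde\Dat_{\privtheta}) - \MMDphisq(\Dat, \tilde\Dat_{\pubtheta}) \le 2\Delta$, which is precisely the stated inequality, the factor of two simply propagating through every constant in the bound for $\Delta$. The $\bigO_p$ form then follows by the same reasoning as in \propref{unif-error}.

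I expect essentially no obstacle in this final step: all of the genuine work --- the concentration of $\vz\trp \Sigma \vz$ via \citet{laurent2000adaptive} and the Gaussian--Lipschitz control of the linear term uniformly over $\va$ with $\lVert \va \rVert \le 2B$ --- has already been done inside \propref{unif-error}. The only point worth flagging is conceptual rather than technical: the argument depends crucially on the noise being fixed once across all $\vtheta$, since only then do the two outer brackets fall under one event; with fresh noise per candidate the comparison would break down.
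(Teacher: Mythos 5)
Your proposal is correct and is essentially the paper's own proof: the paper likewise takes the uniform error bound of \propref{unif-error} (denoting it $\alpha$) and applies the same three-step sandwich $\MMDphisq(\Dat, \tilde\Dat_{\privtheta}) \le \tMMDphisq(\Dat, \tilde\Dat_{\privtheta}) + \alpha \le \tMMDphisq(\Dat, \tilde\Dat_{\pubtheta}) + \alpha \le \MMDphisq(\Dat, \tilde\Dat_{\pubtheta}) + 2\alpha$, which doubles every constant exactly as you describe. You also correctly read $\pubtheta$ as the minimizer of the non-private loss $\MMDphisq$ (the statement's second $\tMMDphisq$ is a typo), and your observation that a single draw of $\vn$ makes one high-probability event cover both minimizers is precisely why the argument works.
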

\begin{proof}
    Let $\alpha$ represent the uniform error bound of \propref{probability_bound_sigma}.
    Applying \propref{probability_bound_sigma},
    the definition of $\privtheta$,
    then \propref{probability_bound_sigma} again:
\[
         \MMDphisq(\Dat, \tilde\Dat_{\privtheta})
    \le \tMMDphisq(\Dat, \tilde\Dat_{\privtheta}) + \alpha
    \le \tMMDphisq(\Dat, \tilde\Dat_{\pubtheta})   + \alpha
    \le  \MMDphisq(\Dat, \tilde\Dat_{\pubtheta})   + 2 \alpha
.\qedhere\]
\end{proof}

\subsection{Quality of the private minimizer: ``optimistic'' analysis}

The preceding analysis is quite ``worst-case,'' since we upper-bounded the MMD by the maximum possible value everywhere.
Noticing that the approximation in \propref{probability_bound_sigma} is tighter when $\lVert \Sigma^{1/2} \va \rVert$ is smaller,
we can instead show an ``optimistic'' rate
which takes advantage of this fact
to show tighter approximation for the minimizer of the noised loss.
In the ``interpolating'' case where the generator can achieve zero empirical MMD,
the convergence rate substantially improves
(generally improving the squared MMD from $\bigO_p(1/m)$ to $\bigO_p(1/m^2)$).

\begin{prop} \label{prop:min-optimistic}
    In the setup of \propref{min-empirical},
    we have with probability at least $1 - \rho$ over $\vn$ that
    \begin{align*}
        \MMDphisq&(\Dat, \tilde\Dat_{\privtheta}) - \MMDphisq(\Dat, \tilde\Dat_{\pubtheta})
        \\&\le 
            9 \Tr(\Sigma) + 4 \sqrt{\Tr(\Sigma)} \MMDphi(\Dat, \tilde\Dat_{\pubtheta})
        \\&\qquad
          + 2 \left( 9 \lVert\Sigma\rVert_F + 2 \sqrt{2 \lVert\Sigma\rVert_\mathit{op}} \MMDphi(\Dat, \tilde\Dat_{\pubtheta}) \right) \sqrt{\log\frac2\rho}
          + 18 \lVert\Sigma\rVert_\mathit{op} \log\frac2\rho
        \\&= \bigO_p\left( \Tr(\Sigma) + \sqrt{\Tr(\Sigma)} \MMDphi(\Dat, \tilde\Dat_{\pubtheta}) \right)
    .\end{align*}
\end{prop}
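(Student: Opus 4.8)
The plan is to refine the argument of \propref{min-empirical}, keeping track of the actual MMD value at each $\theta$ instead of bounding it by its worst case $2B$. Writing $\vn = \Sigma^{1/2}\vz$ with $\vz \sim \mathcal N(0, I_D)$ and $\va_\theta = \mu_\Phi(\Dat) - \mu_\Phi(\tilde\Dat_\theta)$, the per-$\theta$ error is $\tMMDphisq(\Dat,\tilde\Dat_\theta) - \MMDphisq(\Dat,\tilde\Dat_\theta) = \vz\trp\Sigma\vz + 2\va_\theta\trp\Sigma^{1/2}\vz$, exactly as in \eqref{err-decomp-z}. Applying Cauchy--Schwarz only to the cross term gives the uniform bound $\bigl| \tMMDphisq(\Dat,\tilde\Dat_\theta) - \MMDphisq(\Dat,\tilde\Dat_\theta) \bigr| \le \vz\trp\Sigma\vz + 2\,\MMDphi(\Dat,\tilde\Dat_\theta)\,\lVert\Sigma^{1/2}\vz\rVert$, where crucially the two factors $\vz\trp\Sigma\vz$ and $\lVert\Sigma^{1/2}\vz\rVert$ depend on $\vz$ alone and not on $\theta$.

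First I would concentrate those two $\vz$-only quantities. For $\vz\trp\Sigma\vz$ I reuse \eqref{hp1} (Lemma 1 of \citet{laurent2000adaptive}) to get, with probability at least $1-\tfrac\rho2$, the bound $\vz\trp\Sigma\vz \le T := \Tr(\Sigma) + 2\lVert\Sigma\rVert_F\sqrt{\log(2/\rho)} + 2\lVert\Sigma\rVert_\mathit{op}\log(2/\rho)$. For $\lVert\Sigma^{1/2}\vz\rVert$ I use Gaussian Lipschitz concentration exactly as in \propref{unif-error} to get, with probability at least $1-\tfrac\rho2$, the bound $\lVert\Sigma^{1/2}\vz\rVert \le L := \sqrt{\Tr(\Sigma)} + \lVert\Sigma\rVert_\mathit{op}^{1/2}\sqrt{2\log(2/\rho)}$. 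A union bound then gives, on an event of probability at least $1-\rho$, the simultaneous bound $\bigl| \tMMDphisq(\Dat,\tilde\Dat_\theta) - \MMDphisq(\Dat,\tilde\Dat_\theta) \bigr| \le T + 2L\,\MMDphi(\Dat,\tilde\Dat_\theta)$ for every $\theta$ at once.

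Next I would chain through the two minimizers as in \propref{min-empirical}: combining $\MMDphisq(\Dat,\tilde\Dat_{\privtheta}) \le \tMMDphisq(\Dat,\tilde\Dat_{\privtheta}) + T + 2L\,\MMDphi(\Dat,\tilde\Dat_{\privtheta})$, the definition $\tMMDphisq(\Dat,\tilde\Dat_{\privtheta}) \le \tMMDphisq(\Dat,\tilde\Dat_{\pubtheta})$, and $\tMMDphisq(\Dat,\tilde\Dat_{\pubtheta}) \le \MMDphisq(\Dat,\tilde\Dat_{\pubtheta}) + T + 2L\,\MMDphi(\Dat,\tilde\Dat_{\pubtheta})$. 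Writing $a = \MMDphi(\Dat,\tilde\Dat_{\privtheta})$ and $b = \MMDphi(\Dat,\tilde\Dat_{\pubtheta})$, this yields $a^2 - b^2 \le 2T + 2L(a+b)$.

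The main obstacle is that this inequality is self-referential: $a = \MMDphi(\Dat,\tilde\Dat_{\privtheta})$ is the square root of the very left-hand side we want to bound, so it appears on both sides. I would resolve it by reading $a^2 - 2La - (b^2 + 2Lb + 2T) \le 0$ as a quadratic inequality in $a$, whose nonnegative root yields $a \le L + \sqrt{(L+b)^2 + 2T} \le 2L + b + \sqrt{2T}$ by subadditivity of the square root. Substituting this bound on $a$ back into $a^2 - b^2 \le 2T + 2L(a+b)$ gives $\MMDphisq(\Dat,\tilde\Dat_{\privtheta}) - \MMDphisq(\Dat,\tilde\Dat_{\pubtheta}) \le 2T + 4L^2 + 4Lb + 2L\sqrt{2T}$. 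What remains is bookkeeping: expanding $T$ and $L$, the leading $\Tr(\Sigma)$ contributions come from $2T$, $4L^2$, and $2L\sqrt{2T}$ with coefficients summing to $2 + 4 + 2\sqrt2 \le 9$, while $4Lb$ supplies the optimistic cross term $4\sqrt{\Tr(\Sigma)}\,\MMDphi(\Dat,\tilde\Dat_{\pubtheta})$; the remaining $\sqrt{\log(2/\rho)}$ and $\log(2/\rho)$ pieces collect into the stated coefficients after routine majorization via $\lVert\Sigma\rVert_\mathit{op} \le \lVert\Sigma\rVert_F \le \Tr(\Sigma)$, and the $\bigO_p$ form then follows from \lemmaref{O_P-tails}. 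I expect this quadratic step to be the crux, since it is precisely what replaces the worst-case $B$ of \propref{min-empirical} with the data-dependent factor $\MMDphi(\Dat,\tilde\Dat_{\pubtheta})$.
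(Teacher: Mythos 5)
Your proposal follows the paper's strategy in all essentials: the same decomposition $\vz\trp\Sigma\vz + 2\va_\theta\trp\Sigma^{1/2}\vz$ with Cauchy--Schwarz applied only to the cross term, the same three-step chaining through $\privtheta$ and $\pubtheta$, and the same key idea of reading the result as a quadratic inequality in $\MMDphi(\Dat,\tilde\Dat_{\privtheta})$ and bounding it by the positive root. You correctly identify the quadratic step as the crux, and your root computation and resubstitution are sound, so the $\bigO_p\bigl(\Tr(\Sigma) + \sqrt{\Tr(\Sigma)}\,\MMDphi(\Dat,\tilde\Dat_{\pubtheta})\bigr)$ conclusion goes through.

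The one genuine difference is the order of operations, and it matters for the stated constants. The paper solves the quadratic with the \emph{random} quantities $\beta = 2\lVert\Sigma^{1/2}\vz\rVert$ and $\gamma$ in place, exploiting the exact identity $\lVert\Sigma^{1/2}\vz\rVert^2 = \vz\trp\Sigma\vz$ (so $\beta^2 = 4\,\vz\trp\Sigma\vz$) together with $\gamma \le \bigl(\MMDphi(\Dat,\tilde\Dat_{\pubtheta}) + \sqrt{2}\lVert\Sigma^{1/2}\vz\rVert\bigr)^2$; everything then collapses to $\MMDphisq(\Dat,\tilde\Dat_{\pubtheta}) + (6+2\sqrt2)\,\vz\trp\Sigma\vz + 4\lVert\Sigma^{1/2}\vz\rVert\,\MMDphi(\Dat,\tilde\Dat_{\pubtheta})$, and a \emph{single} application of the two concentration bounds yields exactly the coefficients $9$, $18$, $4$, $4\sqrt2$ via $6+2\sqrt2 < 9$. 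You instead concentrate first, replacing $\vz\trp\Sigma\vz$ and $\lVert\Sigma^{1/2}\vz\rVert$ by the numbers $T$ and $L$, which are no longer linked by $L^2 = T$. Consequently your terms $4L^2$ and $2L\sqrt{2T}$ spawn cross terms such as $8\sqrt{2\,\Tr(\Sigma)\lVert\Sigma\rVert_\mathit{op}}\sqrt{\log(2/\rho)}$, $4\sqrt{\Tr(\Sigma)\lVert\Sigma\rVert_F}\,\log(2/\rho)^{1/4}$, and a $\log(2/\rho)^{3/4}$ piece, none of which folds into the stated form at the stated constants: since $\lVert\Sigma\rVert_F \le \sqrt{\Tr(\Sigma)\lVert\Sigma\rVert_\mathit{op}}$, the mixed radicals cannot be majorized by the $\lVert\Sigma\rVert_F$ term, and absorbing them by AM--GM pushes the $\Tr(\Sigma)$ coefficient above $9$ (your own tally $2+4+2\sqrt2$ omits these contributions). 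So your bookkeeping claim of recovering the exact stated finite-sample constants "after routine majorization" does not check out, though the bound you do obtain has the same asymptotic form. If you want the paper's constants, defer concentration until after the quadratic step so that $\beta^2$ and the quadratic noise term are literally the same random variable.
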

\begin{proof}
    Let's use $\MMDhat(\vtheta)$ to denote $\MMDphi(\Dat, \tilde\Dat_\vtheta)$,
    and $\tMMD(\vtheta)$ for $\tMMDphi(\Dat, \tilde\Dat_\vtheta)$.

    For all $\vtheta$, we have that
    \begin{align*}
        \big|\tMMDsq(\vtheta) - \MMDhatsq(\vtheta)\big|
      &\le \vz\trp \Sigma \vz + 2 \bigl\lvert (\mu^\Phi(\Dat) - \mu^\Phi(\tilde\Dat))\trp \Sigma^{1/2} \vz \bigr\rvert
        \\&\le \vz\trp \Sigma \vz + 2 \MMDhat(\vtheta) \lVert \Sigma^{1/2} \vz \rVert
    .\end{align*}
    Thus, applying this inequality in both the first and third lines,
    \begin{align*}
           \MMDhatsq(\privtheta)
      &\le \tMMDsq(\privtheta) + \vz\trp \Sigma \vz + 2 \MMDhat(\privtheta) \lVert \Sigma^{1/2} \vz \rVert
    \\&\le \tMMDsq(\pubtheta) + \vz\trp \Sigma \vz + 2 \MMDhat(\privtheta) \lVert \Sigma^{1/2} \vz \rVert
        \\&\le \MMDhatsq(\pubtheta) + 2 \vz\trp \Sigma \vz + 2 \left( \MMDhat(\privtheta) + \MMDhat(\pubtheta) \right) \lVert \Sigma^{1/2} \vz \rVert
    ;\end{align*}
    in the second line we used that $\tMMD(\privtheta) \le \tMMD(\pubtheta)$.
    Rearranging, we get that
    \begin{equation} \label{eq:quadratic}
        \MMDhatsq(\privtheta)
        - \beta \MMDhat(\privtheta)
        - \gamma
        \le 0
    ,\end{equation}
    where
    \begin{gather*}
        \beta = 2 \lVert \Sigma^{1/2} \vz \rVert \ge 0
        \\
        \gamma = \MMDhatsq(\pubtheta) + 2 \vz\trp \Sigma \vz + 2 \MMDhat(\pubtheta) \lVert \Sigma^{1/2} \vz \rVert \ge 0
    .\end{gather*}
    The left-hand side of \eqref{quadratic} is a quadratic in $\MMDhat(\tilde\vtheta)$ with positive curvature;
    it has two roots, at
    \[
        \frac{\beta}{2} \pm \sqrt{\left(\frac{\beta}{2}\right)^2 + \gamma}
    .\]
    Thus the inequality \eqref{quadratic} can only hold in between the roots;
    the root with a minus sign is negative, and so does not concern us since we know that
    $\MMDhat(\vtheta) \ge 0$.
    Thus, for \eqref{quadratic} to hold,
    we must have
    \begin{align*}
        \MMDhat(\privtheta)
        &\le \tfrac{\beta}{2} + \sqrt{\left(\tfrac{\beta}{2}\right)^2 + \gamma}
        \\
        \MMDhatsq(\privtheta)
        &\le \tfrac{\beta^2}{4} + \left(\tfrac{\beta}{2}\right)^2 + \gamma + \beta \sqrt{\left(\tfrac{\beta}{2}\right)^2 + \gamma}
      \\&\le \gamma + \beta^2 + \beta \sqrt\gamma
    .\end{align*}
    Also note that
    \[
        \gamma
        = \MMDhatsq(\pubtheta) + 2 \vz\trp \Sigma \vz + 2 \MMDhat(\pubtheta) \lVert \Sigma^{1/2} \vz \rVert
        \le \left( \MMDhat(\pubtheta) + \sqrt{2} \lVert \Sigma^{1/2} \vz \rVert \right)^2
    .\]
    Thus, substituting in for $\beta$ and $\gamma$ then simplifying,
    we have that
    \begin{align*}
        \MMDhatsq(\privtheta)
        &\le
          \MMDhatsq(\pubtheta) + (6 + 2 \sqrt 2) \vz\trp \Sigma \vz
        + 4 \lVert \Sigma^{1/2} \vz \rVert \MMDhat(\pubtheta) 
    .\end{align*}
    Using the same bounds on $\vz\trp\Sigma\vz$ and $\lVert \Sigma^{1/2} \vz \rVert$
    as in \propref{unif-error},
    and $6 \sqrt 2 < 9$,
    gives the claimed bound.
\end{proof}

\section{Extended Implementation details} \label{supp:implementation}

\paragraph{Repository.} Our code is available at \url{https://github.com/ParkLabML/DP-MEPF}; the readme files contain further instructions on how to run the code.

\subsection{Hyperparameter settings}
For each dataset, we tune the generator learning rate ($\text{LR}_{gen}$) and moving average learning rate ($\text{LR}_{mavg}$) from choices $10^{-k}$ and $3 \cdot 10^{-k}$ with $k \in \{3,4,5\}$ once for the non-private setting and once at $\epsilon=2$. The latter is used in all private experiments for that dataset, as shown in \ref{tab:tuned_hyperparams}. After some initial unstructured experimentation, hyperparameters are chosen with identical values across dataset shown in \ref{tab:fixed_hyperparams}

For the Cifar10 DP-MERF baseline we tested random tuned random features dimension $d \in \{10000, 50000\}$, random features sampling distribution $\sigma \in \{100, 300, 1000\}$, learning rate decay by 10\% every $e \in \{1000, 10000\}$ iterations and learning rate $10^{-k}$ with $k \in \{2,3,4,5,6\}$. Results presented use $d=500000, \sigma=1000, e=10000, k=3$. 

The DP-GAN baseline for Cifar10 and CelebA uses the same generator as DP-MEPF with 3 residual blocks and a total of 8 convolutional layers and is paired with a ResNet9 discriminator which uses Groupnorm instead of Batchnorm to allow for per-sample gradient computation.
We pre-train the model non-privately to convergence on downsampled imagenet in order to maintain the same resolution of $32 \times 32$ and then fine-tune the model for a smaller number of epochs.
In case of the CelebA $64 \times 64$ data we add another residual block to discriminator and generator to account for the doubling in resolution. The base multiplier for number of feature maps is reduced from 64 to 50 to lessen the increase in number of weights.
Results are the best scores of a grid-search over the following parameters at $\epsilon=2$, which is then used in all settings: number of epochs $\{1, 10, 30, 50\}$ generator and discriminator learning rate separately for $10^{-k}$ and $3 \cdot 10^{-k}$ with $k \in \{3,4,5\}$, clip-norm $\{10^{-3}, 10^{-4}, 10^{-5}, 10^{-6}\}$, batch size $\{128, 256, 512\}$ and, as advised in \citet{dpgans_revisited}, number of discriminator updates per generator $\{1, 10, 30, 50\}$. The chosen values are given in table \ref{tab:dpgan_hyperparams}.

\begin{table}[htb]
\centering
\caption{Learning rate hyperparameters across datasets}
\vspace{-0.2cm}
\scalebox{0.9}{
\begin{tabular}{lc|cc|cc}
Dataset & $\varepsilon$ & \multicolumn{2}{c|}{$(\vphi_1,\vphi_2)$}   & \multicolumn{2}{c}{$(\vphi_1)$} \\
& & $\text{LR}_{gen}$ & $\text{LR}_{mavg}$  & $\text{LR}_{gen}$ & $\text{LR}_{mavg}$   \\
\hline
\multirow{2}{*}{MNIST}
   & $\varepsilon =\infty$ & $10^{-5}$ & $10^{-3}$  & $10^{-5}$ & $10^{-3}$ \\
   & $\varepsilon <\infty$ & $10^{-5}$ & $10^{-4}$  & $10^{-5}$ & $10^{-4}$ \\
\hline
\multirow{2}{*}{FashionMNIST}
   & $\varepsilon =\infty$ & $10^{-5}$ & $10^{-3}$  &  $10^{-5}$ & $10^{-3}$ \\
   & $\varepsilon <\infty$ & $10^{-4}$ & $10^{-3}$  &  $10^{-4}$ & $10^{-3}$ \\
\hline
\multirow{3}{*}{CelebA32}      & $\{\infty, 10, 5\}$ & $3 \cdot 10^{-4}$ & $10^{-4}$  & $3 \cdot 10^{-4}$ & $10^{-4}$ \\
   & $\{2, 1\}$ & $3 \cdot 10^{-4}$ & $3 \cdot 10^{-4}$   & $3 \cdot 10^{-4}$ & $3\cdot 10^{-4}$ \\
   & $\{0.5, 0.2\}$ & $10^{-3}$ & $3 \cdot 10^{-4}$   & $\cdot 10^{-3}$ & $3\cdot 10^{-4}$ \\
\hline
\multirow{3}{*}{CelebA64}      & $\{\infty, 10, 5\}$ & $3 \cdot 10^{-4}$ & $ 10^{-4}$ & $3 \cdot 10^{-4}$ & $3 \cdot 10^{-4}$ \\
   & $\{2, 1\}$ & $3 \cdot 10^{-4}$ & $10^{-3}$  & $3 \cdot 10^{-4}$ & $3 \cdot 10^{-4}$ \\
   & $\{0.5, 0.2\}$ & $ 10^{-3}$ & $ 10^{-3}$  & $10^{-3}$ & $ 10^{-3}$ \\
\hline
\multirow{3}{*}{Cifar10 labeled}   & $\{\infty, 10, 5\}$ &   $10^{-3}$ & $3 \cdot 10^{-4}$   &   $10^{-3}$ & $10^{-4}$ \\
   & $\{2, 1\}$ &  $10^{-3}$ & $10^{-2}$   &  $10^{-3}$ & $10^{-2}$ \\
   & $\{0.5, 0.2\}$ &  $10^{-3}$ & $10^{-2}$   &  $10^{-3}$ & $10^{-2}$ \\
\hline
\multirow{3}{*}{Cifar10 unlabeled}   & $\{\infty, 10, 5\}$ & $10^{-3}$ & $ 10^{-3}$   & $10^{-3}$ & $10^{-3}$  \\ 
   & $\{2, 1\}$ & $10^{-3}$  &  $10^{-3}$   & $10^{-3}$  & $10^{-3}$  \\
   & $\{0.5, 0.2\}$ & $10^{-3}$  & $10^{-3}$   & $10^{-3}$  & $10^{-3}$   \\
\hline
\end{tabular}}\label{tab:tuned_hyperparams}
\end{table}

\begin{table}[htb]
\centering
\caption{Hyperparameters fixed across datasets}
\vspace{-0.2cm}
\scalebox{0.9}{
\begin{tabular}{lc}
Parameter & Value   \\
\hline
($\phi_1$)-bound & 1 \\
($\phi_2$)-bound    & 1  \\
iterations (MNIST \& FashionMNIST) & 100,000 \\
batch size (MNIST and FashionMNIST)   & 100   \\
iterations (Cifar10 \& CelebA) & 200,000 \\
batch size (Cifar10 and CelebA)   & 128   \\
seeds & 1,2,3,4,5   \\
\hline
\end{tabular}}\label{tab:fixed_hyperparams}
\end{table}

\begin{table}[htb]
\centering
\caption{Hyperparameters of DP-GAN for Cifar10 and CelebA}
\scalebox{0.9}{
\begin{tabular}{l|c|c|cccc}
 & Cifar10 & CelebA $32 \times 32$ & \multicolumn{4}{c}{CelebA $64 \times 64$} \\ 
 & & & $\epsilon \in \{0.2, 0.5\}$ & $\epsilon = 1$ & $\epsilon = 2$ & $\epsilon \in \{5, 10\}$  \\
\hline
$\text{LR}_{gen}$ & $10^{-4}$ & $3 \cdot 10^{-4}$ & $3 \cdot 10^{-4}$ & $3 \cdot 10^{-4}$  & $3 \cdot 10^{-4}$ & $3 \cdot 10^{-4}$ \\
$\text{LR}_{dis}$  & $10^{-3}$ & $3 \cdot 10^{-4}$ & $10^{-3}$ & $3 \cdot 10^{-4}$ & $10^{-3}$ & $10^{-3}$  \\
batch size & 512 & 512 & 512 & 512 & 512 & 512 \\
epochs & 10 & 10 & 10 & 10 & 10 & 10 \\
discriminator frequency   & 10 & 10 & 30 & 30 & 10 & 10   \\
clip norm & $10^{-5}$ & $10^{-4}$ & $10^{-5}$ & $10^{-5}$ & $10^{-4}$ & $10^{-5}$ \\
\hline
\end{tabular}}\label{tab:dpgan_hyperparams}
\end{table}

\section{Detailed Tables}
Below we present the results from the main paper with added $a \pm b$ notation, where $a$ is the mean and $b$ is the standard deviation of the score distribution across three independent runs for MNIST and FashionMNIST and 5 independent runs for Cifar10 and CelebA.

\begin{table}[H]
\caption{Downstream accuracies of our method for MNIST  at  varying values of $\epsilon$}
\centering
\scalebox{0.85}{
\begin{tabular}{llcccccc}
&  & $\epsilon=\infty$ & $\epsilon=10$  & $\epsilon=5$ & $\epsilon=2$ & $\epsilon=1$ & $\epsilon=0.2$ \\
              \hline
  \multirow{2}{*}{MLP} & DP-MEPF $(\vphi_1,\vphi_2)$    & 91.4 $\pm$ 0.3  & 89.8 $\pm$ 0.5 & 89.9 $\pm$ 0.2    & 89.3 $\pm$ 0.3    & 89.3 $\pm$ 0.6    & 79.9 $\pm$ 1.3 \\
& DP-MEPF ($\vphi_1$)       & 88.2 $\pm$ 0.6 & 88.8 $\pm$ 0.1 & 88.4 $\pm$ 0.5   & 88.0 $\pm$ 0.2    & 87.5 $\pm$ 0.6    & 77.1 $\pm$ 0.4      \\
 \hline
\multirow{2}{*}{LogReg}  & DP-MEPF $(\vphi_1,\vphi_2)$   & 84.6 $\pm$ 0.5 &  83.4 $\pm$ 0.6 & 83.3 $\pm$ 0.7 & 82.9 $\pm$ 0.7   & 82.5 $\pm$ 0.5 & 75.8 $\pm$ 1.1 \\
  & DP-MEPF ($\vphi_1$) & 81.4 $\pm$ 0.4 &  80.8 $\pm$ 0.9 & 80.8 $\pm$ 0.8 & 80.5 $\pm$ 0.6  & 79.0 $\pm$ 0.6 & 72.1 $\pm$ 1.4 \\
 \hline
\end{tabular}
\label{tab:dmnist_var_eps_extended}
}
\end{table}

\begin{table}[H]
\caption{Downstream accuracies of our method for FashionMNIST  at  varying values of $\epsilon$}
\centering
\scalebox{0.85}{
\begin{tabular}{llcccccc}
&  & $\epsilon=\infty$ & $\epsilon=10$    & $\epsilon=5$ & $\epsilon=2$ & $\epsilon=1$ & $\epsilon=0.2$ \\
              \hline
  \multirow{2}{*}{MLP} & DP-MEPF $(\vphi_1,\vphi_2)$ & 74.4 $\pm$ 0.3 & 76.0 $\pm$ 0.4 &   75.8 $\pm$ 0.6 &  75.1 $\pm$ 0.3 &  74.7 $\pm$ 1.1 &   70.4 $\pm$ 1.9  \\
& DP-MEPF ($\vphi_1$)   & 73.8 $\pm$ 0.5 & 75.5 $\pm$ 0.6    &  75.1 $\pm$ 0.8   &   75.8 $\pm$ 0.7  &   75.0 $\pm$ 1.8  &  69.0 $\pm$ 1.5 \\
 \hline
\multirow{2}{*}{LogReg}  & DP-MEPF $(\vphi_1,\vphi_2)$ & 74.3 $\pm$ 0.1 & 75.7 $\pm$ 1.0 & 75.2 $\pm$ 0.4  &   75.8 $\pm$ 0.4 & 75.4 $\pm$ 1.1 & 72.5 $\pm$ 1.2 \\
  & DP-MEPF ($\vphi_1$) & 72.8 $\pm$ 0.5 & 75.5 $\pm$ 0.1 &  75.5 $\pm$ 0.8  & 76.4 $\pm$ 0.8 &  76.2 $\pm$ 0.8  &  71.7 $\pm$ 0.4 \\
 \hline
\end{tabular}
\label{tab:fmnist_var_eps_extended}
}
\end{table}

\begin{table}[H]
\centering
\caption{CelebA FID scores $32 \times 32$ (lower is better)}
\scalebox{0.85}{
\begin{tabular}{lccccccc}
             & $\epsilon=\infty$ & $\epsilon=10$ & $\epsilon=5$ & $\epsilon=2$ & $\epsilon=1$ & $\epsilon=0.5$ & $\epsilon=0.2$   \\
\hline
DP-MEPF $(\vphi_1,\vphi_2)$ &  18.5 $\pm$ 0.5 &  17.4 $\pm$ 0.7 &  17.5 $\pm$ 0.6 &  18.1 $\pm$ 0.8 &  19.0 $\pm$ 0.5 &  21.4 $\pm$ 1.3 &  25.8 $\pm$ 2.1  \\
DP-MEPF $(\vphi_1)$        &  16.6 $\pm$ 0.7 & 16.3  $\pm$ 0.9 &  16.9 $\pm$ 0.5 &  16.5 $\pm$ 0.8 &  17.2 $\pm$ 0.9 &  21.8 $\pm$ 1.0 &  25.5 $\pm$  1.1  \\
 \hline
\end{tabular}\label{tab:celeba_fid_extended}}
\end{table}

\begin{table}[htb]
\centering
\caption{CelebA FID scores $64 \times 64$ (lower is better)}
\scalebox{0.85}{
\begin{tabular}{lccccccc}
             & $\epsilon=\infty$ & $\epsilon=10$ & $\epsilon=5$ & $\epsilon=2$ & $\epsilon=1$ & $\epsilon=0.5$ & $\epsilon=0.2$   \\
\hline
DP-MEPF $(\vphi_1,\vphi_2)$  & 18.6 $\pm$ 1.0 & 18.5 $\pm$ 1.2 & 19.1 $\pm$ 0.9 & 18.4 $\pm$ 1.0 & 19.0 $\pm$ 1.2 & 21.4 $\pm$ 1.3 & 26.8 $\pm$ 1.5 \\
DP-MEPF $(\vphi_1)$       & 16.3 $\pm$ 0.4 & 17.4 $\pm$ 1.4 & 16.5 $\pm$ 0.8 & 16.9 $\pm$ 1.1 & 18.4 $\pm$ 0.9 & 20.4 $\pm$ 0.8 & 27.7 $\pm$ 2.1  \\

 \hline
\end{tabular}\label{tab:celeba64_fid_extended}}
\end{table}

\begin{table}[htb]
\centering
\caption{FID scores for synthetic \textit{labelled} CIFAR-10 data (generating both labels and input images)}
\vspace{-0.2cm}
\scalebox{0.85}{
\begin{tabular}{lccccccc}
& $\epsilon=\infty$ & $\epsilon=10$ & $\epsilon=5$ & $\epsilon=2$ & $\epsilon=1$ & $\epsilon=0.5$  & $\epsilon=0.2$ \\
\hline
\textbf{DP-MEPF ($\vphi_1, \vphi_2$)}   & 27.7 $\pm$ 3.1 & 29.1 $\pm$ 1.3 & 30.0 $\pm$ 0.8 & 39.5 $\pm$ 1.9 & 54.0 $\pm$ 1.3 & 76.4 $\pm$ 3.9 & 226.0 $\pm$ 5.4  \\
\textbf{DP-MEPF ($\vphi_1$)}     & 28.4 $\pm$ 2.8 & 30.3 $\pm$ 2.1 & 35.6 $\pm$ 5.8 & 42.0 $\pm$ 3.0 & 56.5 $\pm$ 3.4 & 92.0 $\pm$ 3.5 &268.3  $\pm$ 8.5    \\
 \hline
\end{tabular}}\label{tab:cifar_fid_extended_lab}
\end{table}

\begin{table}[htb]
\centering
\caption{Test accuracies (higher better) of ResNet9 trained on CIFAR-10 synthetic data with varying privacy guarantees. When trained on real data, test accuracy is 88.3\%}
\scalebox{0.9}{
\begin{tabular}{lccccccc}
 & $\epsilon=\infty$ & $\epsilon=10$ & $\epsilon=5$ & $\epsilon=2$ & $\epsilon=1$ & $\epsilon=0.5$  & $\epsilon=0.2$  \\
 \hline
\textbf{DP-MEPF ($\vphi_1, \vphi_2$)}  & 57.5 $\pm$ 3.3 & 53.0 $\pm$ 2.8 & 43.9 $\pm$ 1.2 & 40.0 $\pm$ 1.9 & 28.5 $\pm$ 4.5 & 18.0 $\pm$ 1.0 &  16.2 $\pm$ 1.8 \\
\textbf{DP-MEPF ($\vphi_1$)}       & 43.8 $\pm$ 3.5 & 40.7 $\pm$ 4.2 & 32.3 $\pm$ 6.2 & 42.6 $\pm$ 1.6 & 33.2 $\pm$ 2.6 & 18.8 $\pm$ 4.0 & 15.3 $\pm$ 2.5  \\
\hline
\end{tabular}\label{tab:cifar_labeled_acc_extended}}
\vspace{-0.6cm}
\end{table}

\begin{table}[htb]
\centering
\caption{FID scores for synthetic \textit{unlabelled} CIFAR-10 data}
\vspace{-0.2cm}
\scalebox{0.85}{
\begin{tabular}{lccccccc}
 & $\epsilon=\infty$ & $\epsilon=10$ & $\epsilon=5$ & $\epsilon=2$ & $\epsilon=1$ & $\epsilon=0.5$  & $\epsilon=0.2$  \\
\hline
\textbf{DP-MEPF ($\vphi_1, \vphi_2$)} & 38.5 $\pm$ 1.5  & 38.8 $\pm$ 2.0 & 37.0 $\pm$ 1.1  & 38.7 $\pm$ 2.2 & 43.0 $\pm$ 1.1 & 49.4 $\pm$ 1.0 & 67.3 $\pm$ 2.6  \\
\textbf{DP-MEPF ($\vphi_1$)}          & 38.5 $\pm$ 0.6 & 38.5 $\pm$ 0.4  & 38.6 $\pm$ 1.3  & 40.1 $\pm$ 1.1 & 45.1 $\pm$ 2.4  & 49.8 $\pm$ 2.5 & 72.3 $\pm$ 4.0 \\
 \hline
\end{tabular}}\label{tab:cifar_fid_extended_unlab}
\vspace{-0.6cm}
\end{table}

\section{Encoder architecture comparison}

We are testing a large collection of classifiers of different sizes from the torchvision library including VGG, ResNet, ConvNext and EfficientNet. For each we look at unlabelled Cifar10 generation quality in the non-DP setting and at $\epsilon=0.2$. In each architecture, we use all activations from convolutional layers with a kernel size greater than 1x1. We list the number of extracted features along with the achieved FID score in table \ref{tab:cifar_encoder_comparison}, where each result is the best result obtained by tuning learning rates. As already observed in \citet{perceptual_features}, we find that VGG architectures appear to learn particularly useful features for feature matching. We hypothesized that in the private setting other architectures with fewer features might outperform the VGG model, but have found this to not be the case.

\begin{table}[htb]
\centering
\caption{Unlabeled Cifar10 FID scores achieved with different feature extractors. VGG models yield the best results in both non-DP and high DP settings.}
\scalebox{0.9}{
\begin{tabular}{l|r|cc|cc}
Encoder model & \#features & \multicolumn{2}{ c| }{$\epsilon=\infty$} & \multicolumn{2}{ c }{$\epsilon=0.2$}  \\
& & \textbf{($\vphi_1, \vphi_2$)} & \textbf{($\vphi_1$)}& \textbf{($\vphi_1, \vphi_2$)} & \textbf{($\vphi_1$)} \\
\hline
VGG19 & 303104          &  35.0 &  37.0 &   56.2 &  85.8 \\
VGG16 & 276480          &  37.4 &  39.8 &   71.4 &  72.2 \\
VGG13 & 249856          &  38.2 &  36.7 &   78.1 &  71.2 \\
VGG11 & 151552          &  40.5 &  41.6 &   65.4 &  68.6 \\
ResNet152 & 429568      &  71.8 &  70.1 &   88.6 &  87.9 \\
ResNet101 & 300544      &  77.5 &  73.7 &   76.0 &  82.4 \\
ResNet50 & 196096       &  71.5 &  76.3 &   90.0 & 105.1 \\
ResNet34 & 72704        &  74.8 & 103.3 &   89.1 &  93.1 \\
ResNet18 & 47104        &  84.9 &  85.0 &  104.5 &  95.2 \\
ConvNext large & 161280 & 141.9 & 232.0 &  138.2 & 221.6 \\
ConvNext base & 107520  & 142.4 & 248.0 &  157.0 & 200.1 \\
ConvNext small & 80640  & 171.7 & 212.3 &  169.9 & 202.9 \\
ConvNext tiny & 52992   & 145.6 & 218.2 &  138.8 & 205.8 \\
EfficientNet L & 119168 & 200.9 & 229.0 & 243.7 & 226.6 \\
EfficientNet M & 68704  & 185.7 & 177.1 & 218.7 & 227.1 \\
EfficientNet S & 47488  & 157.5 & 160.6 & 171.5 & 186.7 \\
\hline
\end{tabular}\label{tab:cifar_encoder_comparison}}
\end{table}

\section{Public dataset comparison}
\begin{table}[htb]
\centering
\caption{FID scores achieved for CelebA $32 \times 32$ using a ResNet encoder with different public training sets}
\scalebox{0.9}{
\begin{tabular}{lccc}
& ImageNet & Cifar10 & SVHN \\
\hline
FID & 47.6  &  51.2 & 65.2 \\
\hline
\end{tabular}\label{tab:celeba_data_comparison}}
\end{table}

We pretrained a ResNet18 using ImageNet, CIFAR10, and SVHN as our public data, respectively. We then used the perceptual features to train a generator using CelebA dataset as our private data at a privacy budget of $\epsilon=0.2$ and obtained the scores shown in \ref{tab:celeba_data_comparison}.
These numbers reflect our intuition that as long as the public data is sufficiently similar and contains more complex patterns than private data, e.g., transferring the knowledge learned from ImageNet as public data to generate CelebA images as private data, the learned features from public data are useful enough to generate good synthetic data. In addition, as the public data become more simplistic (from CIFAR10 to SVHN), the usefulness of such features reduces in producing good CelebA synthetic samples. 

\FloatBarrier

\section{Training DP-MEPF without auxiliary data}
While DP-MEPF is explicitly designed to take advantage of available public data, one might wonder how the method performs if no such data is available. The following experiment on CIFAR10 explores this scenario. We assume that a privacy budget of $\epsilon = 10$ is given. We use some part of the budget for feature extractor (i.e. the classifier) training and the rest of the budget for the generator training. 

For a feature extractor, we have trained ResNet-20 classifiers with DP-SGD at three different levels of $\epsilon \in \{2,5,8\}$
 for classifying the CIFAR10 dataset. We set the clipping norm to 0.01 and trained the classifiers for $7, 49$ and $98$ epochs, respectively. Their test accuracies are $38.4\%, 49.5\%$ and $54.0\%$
 respectively. We also include scores for DP-MEPF applied to the untrained Classifier, denoted as $\epsilon = 0$.

Then, we train the generator using these four sets of features to generate CIFAR10 images, where each generator training uses the rest of the budget, i.e., $\epsilon \in \{8, 5, 2\}$ and $\epsilon = 10$ for the untrained classifier. We tune the learning rate in each of the four settings and keep other hyperparameters at default values.

\begin{table}[H]
\centering
\caption{DP-MEPF results in CIFAR10 when using a DP feature extractor ($\epsilon=0$ is an untrained extractor)}
\label{tab:dp_encoder}
\begin{tabular}{ccr}
$\epsilon$ for feature extractor training        & for generator training &  FID \\
\hline
0 & 10 & 111.1 \\
2 & 8 & 127.0 \\
5 & 5 & 90.8 \\
8 & 2 & 119.0 \\
\hline
\end{tabular}
\end{table}

As expected, in \tabref{dp_encoder} we see a considerable increase in the FID score, compared to DP-MEPF with public data. A balanced allocation of privacy budget with $\epsilon = 5$ each for classifier and generator training yields the best result at an FID score of 90.8 and performs significantly better than just using a randomly initialized feature extractor, which only achieves a score of 111.1. 
For comparison: with public data DP-MEPF achieves an FID score of 37.0 at $\varepsilon = 5$, highlighting the importance of such data to our method.

\FloatBarrier
\section{Additional Plots}
Below we show samples from our generated MNIST and FashionMNIST data in \figref{dmnist_samples} and \figref{fmnist_samples} respectively.

\begin{figure}[htb]
    \centering
    \includestandalone[mode=buildnew, scale=1.0]{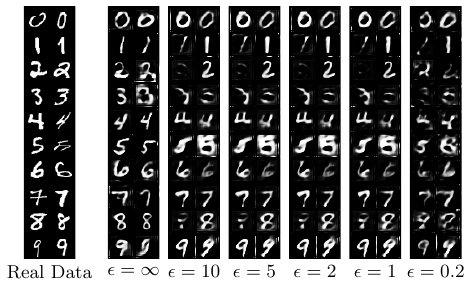}
    \caption{MNIST samples produced with DP-MEPF ($\vphi_1, \vphi_2$) at various levels of privacy}
    \label{fig:dmnist_samples}
\end{figure}

\begin{figure}[htb]
    \centering
    \includestandalone[mode=buildnew, scale=1.0]{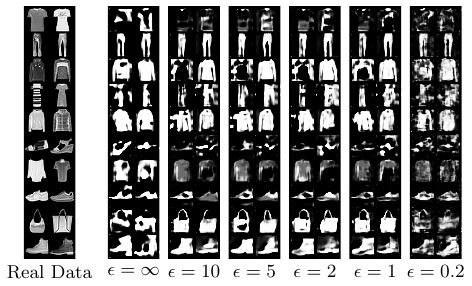}
    \caption{Fashion-MNIST samples produced with DP-MEPF ($\vphi_1, \vphi_2$) at various levels of privacy}
    \label{fig:fmnist_samples}
\end{figure}

\end{document}